\newtheorem{theorem}{Theorem}
\newtheorem{assumption}[theorem]{Definition}
\newtheorem{definition}[theorem]{Definition}
\newtheorem{lemma}[theorem]{Lemma}
\def\ci{\perp\!\!\!\perp}
\newcommand{\vdag}{\textsc{FindVariableDAG}\xspace}
\journal{International Journal of Approximate Inference}
\begin{document}

\begin{frontmatter}

\title{Learning Structures of Bayesian Networks for Variable Groups\footnote{\copyright 2017. This manuscript version is made available under the CC-BY-NC-ND 4.0 license \url{http://creativecommons.org/licenses/by-nc-nd/4.0/}}}
\author{Pekka Parviainen and Samuel Kaski\\ Helsinki Institute for Information Technology HIIT,\\ Department of Computer Science, \\Aalto University, Espoo, Finland }

\begin{abstract}
Bayesian networks, and especially their structures, are powerful tools for representing conditional independencies and dependencies between
random variables. In applications where related variables form \emph{a  priori} known groups, chosen to represent different ``views'' to or aspects of the same entities, one may be more interested in modeling dependencies between groups of variables rather than between individual variables. Motivated by this, we study prospects of representing relationships between variable groups using Bayesian network structures. We show that for dependency structures between groups to be expressible exactly, the data have to satisfy the so-called groupwise faithfulness assumption. We also show that one cannot learn causal relations between groups using only groupwise conditional independencies, but also variable-wise relations are needed. Additionally, we present algorithms for finding the groupwise dependency structures.
\end{abstract}

\begin{keyword}
Bayesian networks, structure learning, multi--view learning, conditional independence
\end{keyword}

\end{frontmatter}

\section{Introduction}

Bayesian networks are representations of joint distributions of random
variables. They are powerful tools for modeling
dependencies between variables. They consist of two parts, the structure and parameters, which together specify the joint distribution. The dependencies and independencies
between variables are implied by the structure of a Bayesian network, which is
represented by a directed acyclic graph (DAG). The parameters specify local conditional probability distributions for each variable.

In practical applications it is common that the analyst does not know
the structure of a Bayesian network \emph{a priori}. However, 
samples from the distribution of interest are commonly available. This
has motivated development of algorithms for learning Bayesian networks
from observational data. There are two main approaches to learning the structure of a Bayesian
network from data: constraint-based and score-based. The {\em
  constraint-based approach} (see, e.g., \cite{pearl00, spirtes00}) relies on
testing conditional independencies between variables. The network is
constructed so that it satisfies the found conditional independencies
and dependencies. In the {\em score-based approach} (see, e.g.,
\cite{cooper92,heckerman95}) one assigns each network a score that
measures how well the network fits the data. Then one tries to find
a network that maximizes the score.
Although the problem is NP-hard
\citep{chickering96}, there exist plenty of exact algorithms
\citep{cussens11, jaakkola10, silander06} as well as theoretically
sound heuristics \citep{aliferis10a,chickering02b}. Learning the parameters given the structure is rather straightforward and thus we concentrate on structure learning.

Bayesian networks model dependencies and independencies between
individual variables. However, sometimes the relationships between groups
of variables are even more interesting. An example is multiple
different measurements of expression of the same genes, made with
multiple measurement platforms, but the goal being to find
relationships between the genes and not of the measurement
platforms. The measurements of each gene would here be the groups.
Another example is measurements of expression of individual genes, with
the goal of the analysis being to understand cross-talk between
pathways consisting of multiple genes, or more generally,
relationships on a higher level of a hierarchy tree in hierarchically
organized data. Here the pathways would be the groups. In both cases,
a Bayesian network for variable groups would directly address the
analysis problem, and would also have fewer variables and hence be
easier to visualize.

More generally, the setup matches multi-view learning where data
consist of multiple ``views'' to the same entity, multiple aspects of
the same phenomenon, or multiple phenomena whose relationships we want
to study. For these setups, a Bayesian network for variable groups can
be seen as a dimensionality reduction technique with which we extract
interesting information from a larger, noisy data set. Note that our model is targeted for a very specific application, that is, on learning conditional independencies between known variable groups. It is not a general-purpose dimensionality reduction technique such as, say, PCA.

While the structure learning problem is well-studied for individual
variables, knowledge about modeling relationships between variable groups
using the Bayesian network framework is scarce. 
Motivated by this, we study prospects of learning Bayesian network structures for variable groups. In summary, while Bayesian networks for variable groups can be learned under some conditions, strong assumptions are required and hence they have limited applicability. 

We start by exploring theoretical possibilities and limitations
for learning Bayesian networks for variable groups. First, we
show that in order to be able to learn a structure that expresses
exactly the conditional independencies between variable groups, the
distribution and the groups need to together satisfy a condition that
we call groupwise faithfulness (Section~\ref{sec:gf}); our simulations suggest that this is a rather strong assumption. 
Then, we study possibilities of
finding causal relations between variable groups. It turns out that
one can draw only very limited causal conclusions based on only the
conditional independencies between groups
(Section~\ref{sec:causality}), and hence also dependencies between the
individual variables are needed. 

We introduce methods for learning Bayesian network
structures for variable groups. First, it is possible to learn a structure directly using
conditional independencies or local scores between groups
(Section~\ref{sec:direct}). However, this approach suffers from needing
lots of data. For the second approach, we observe that if all
conditional independencies between individual variables are known, one
can infer the conditional independencies between groups. The second
approach is to construct a Bayesian network for individual variables
and then to infer the structure between groups
(Section~\ref{sec:var_net}). The third approach is to learn structures for both individual variables and groups simultaneously (Section~\ref{sec:simultaneous}). 
Finally, we evaluate the algorithms in practice (Section~\ref{sec:experiments}). 
Our results suggest that the second and third approaches are more accurate.

\subsection{Related Work}

We are not aware of any work with close resemblance with this
study, but there have been some efforts to solve related problems. Next, we will briefly introduce some related and explain why we have not based our work on them. 

\sloppy
Object-oriented Bayesian networks \citep{koller97} are a generalization of Bayesian networks and enable representing groups of variables as objects. Hierarchical Bayesian networks \citep{gyftodimos02} are another generalization of
Bayesian networks in which variables can be aggregations (or Cartesian
products) of other variables and a hierarchical tree
is used to represent relations between them. Both of these formalisms are very general and they are capable of representing
conditional independencies between variable groups. Therefore, our results may be applied to these models. However, these models are unnecessarily complicated for our analysis and thus we do not consider them further here.

Multiply sectioned Bayesian networks \citep{xiang93} model dependencies between overlapping variable groups. They are typically used to aid inference. 
They decompose a DAG into a hypertree where hypernodes are labelled by a subgraph and hyperlinks by separator sets. 
However, multiply sectioned Bayesian networks require variable groups to be overlapping and thus are not suitable for modelling dependencies between non-overlapping variable groups.

Module networks \citep{segal05} have been designed to handle large
data sets. The variables are partitioned into modules where the
variables in the same module share parents and parameters. Module
networks are particularly good for approximate density
estimation. However, their structural limitations make them unsuitable
for analysing conditional independencies between variable groups.

Huffman networks \citep{davis99} are Bayesian networks were nodes represent variable groups. They are designed to aid data compression and the variable groups are learned to enable efficient compression.

Burge and Lane \cite{burge06} have presented Bayesian networks for aggregation hierarchies which are related to hierarchical Bayesian networks. Groups of variables are aggregated by, for example, taking a maximum or mean and then networks are learned between the aggregated variables. From our point of view, the downside of this approach is that conditional independencies between aggregated variables do not necessarily correspond to conditional independencies between groups.

Entner and Hoyer \cite{entner12} have presented an algorithm for
finding causal structures among groups of continuous variables. Their
model works under the assumptions that variables are linearly related
and associated with non-Gaussian noise.

An earlier version of this paper \cite{parviainen16} appeared in the proceedings of the PGM 2016 conference. New contents of this paper include an analysis of the relationship between faithfulness and groupwise faithfulness (Theorems~\ref{thm:imply} and \ref{thm:imply2}), an alternative definition of causality for variable groups and an analysis of it (Definition~\ref{def:strong_causality} and Theorem~\ref{thm:strong_causality}), a new algorithm for learning group DAGs (Section~\ref{sec:simultaneous}), and more thorough experiments (Section~\ref{sec:experiments}).

\section{Preliminaries}

\subsection{Conditional Independencies}

Two random variables $x$ and $y$ are {\em conditionally independent}
given a set $S$ of random variables if $P(x, y| S) = P(x|S)
P(y|S)$. If the set $S$ is empty, variables $x$ and $y$ are marginally
independent. 
We use  $x\ci y|S$ to denote that $x$ and $y$ are conditionally independent  given $S$. 

Conditional independence can be generalized to sets of random variables. Two sets of random variables $X$ and $Y$ are conditionally independent given a set $S$ of random variables if $P(X, Y|S) = P(X|S) P(Y|S)$.

\subsection{Bayesian Networks}

A {\em Bayesian network} is a representation of a joint distribution of random variables. A Bayesian network consists of two parts: a structure and parameters. The structure of a Bayesian network is a directed acyclic graph (DAG) which expresses the conditional independencies and the parameters determine the conditional distributions.

Formally, a DAG is a pair $G = (N, A)$ where $N$ is the node set and $A$ is the arc set.  If there is an arc from $u$ to $v$, that is, $uv\in A$ then we say that $u$ is a {\em parent} of $v$ and $v$ is a {\em child} of $u$. The set of parents of $v$ in $A$ is denoted by $A_v$. Nodes $v$ and $u$ are  said to be {\em spouses} of each other if they have a common child and there is no arc between $v$ and $u$. Further, if there is a directed path from $u$ to $v$ we say that $u$ is an {\em ancestor} of $v$ and $v$ is a {\em descendant} of $u$.  The cardinality of $N$ is denoted by $n$. When there is no ambiguity on the node set $N$, we identify a DAG by its arc set $A$.

Each node in a Bayesian network is associated with a conditional probability distribution of the node given its parents. The conditional probability distribution of the  node is specified by the parameters. A DAG represents a joint probability distribution over a set of random variables if the joint distribution satisfies the {\em local Markov condition}, that is, every node is conditionally independent of its non-descendants given its parents. Then the joint distribution over a node set $N$ can be written as $P(N) = \prod_{v\in N} P(v| A_v)$ where the conditional probabilities for node $v$ are specified by the parameters $\theta_v$. We denote the set of all local parameters by $\Theta$. Finally, we define a Bayesian network to be a pair $(G, \Theta)$.

The conditional independencies implied by a DAG can be extracted using a d-separation criterion. 
The {\em skeleton} of a DAG $A$ is an undirected graph that is obtained by replacing all directed arcs $uv \in A$ with undirected edges between $u$ and $v$. A {\em path} in a DAG is a cycle-free sequence of edges in the corresponding skeleton. A node $v$ is a {\em head-to-head node} along a path if there are two consecutive arcs $uv$ and $wv$ on that path. Nodes $v$ and $u$ are {\em d-connected} by nodes $Z$ along a path from $v$ to $u$ if every head-to-head node along the path is in $Z$ or has a descendant in $Z$ and none of the other nodes along the path is in $Z$. Nodes $v$ and $u$ are {\em d-separated} by nodes $Z$ if they are not d-connected by $Z$ along any path from $v$ to $u$. 

Nodes $s$, $t$, and $u$ form a {\em v-structure} in a DAG if $s$ and $t$ are spouses and $u$ is their common child. 
Two DAGs are said to be {\em Markov equivalent} if they imply the same set of conditional independence statements. It can be shown that two DAGs are Markov equivalent if and only if they have the same skeleton and same v-structures \citep{verma90}. 

A distribution $p$ is said to be {\em faithful} to a DAG $A$ if $A$ and $p$ imply exactly the same set of conditional independencies. If $p$ is faithful to $A$ then $v$ and $u$ are conditionally independent given $Z$ in $p$ if and only if $v$ and $u$ are d-separated by $Z$ in
$A$. This generalizes 
to variable sets. That is, if $p$ is faithful to $A$  then variable sets $T$ and $U$ are conditionally independent given $Z$ in $p$ if and only if $t$ and $u$ are d-separated by $Z$ in $A$ for all $t\in T$ and $u\in U$.

\section{Groupwise Independencies}

In this section we introduce a new assumption, groupwise faithfulness,
that is necessary for principled learning of DAGs for variable groups. We will also
show that groupwise conditional independencies have a limited role in learning causal relations
between groups.

\subsection{Groupwise Faithfulness} \label{sec:gf}

First, let us introduce some terminology. Recall that $N$ is our node set. Let $W = \{W_1, \ldots, W_k\}$ be a collection of nonempty sets where $W_i\subseteq N\, \forall i$, and $W$ forms a partition of $N$. We call $W$ a {\em grouping}. We call a DAG on $N$ a {\em variable DAG} and a DAG on $W$ a {\em group  DAG}; Note that the nodes of the group DAG are subsets of $N$. We try to solve the following computational problem: We are given  a grouping $W$ and data $D$ from a distribution $p$ on variables $N$ that is faithful to a variable DAG $G$. The task is to learn a group DAG $H$ on $W$ such that for all $W_i, W_j\in W$ and $S = \cup_l T_l$, with $T = \{T_1, \ldots, T_k\}\subseteq W\setminus \{W_i, W_j\}$, it holds that $W_i$  and $W_j$ are d-separated by $S$ in $H$ if and only if   $W_i\ci W_j | S$ in $p$.

It is well-known that DAGs are not closed under marginalization. That is, even though the data-generating distribution is faithful to a DAG on a node set $N$, it is possible that the conditional independencies on some subset of $N$ are not exactly representable by any DAG. We note that DAGs are not closed under aggregation, either. By aggregation we mean representing conditional independencies among groups using a group DAG. We show that by presenting an example. Consider a distribution that is faithful to the DAG in Figure~\ref{fig:group_faithful1}(a). We want to express conditional independencies between groups $V_1$, $V_2$, and $V_3$. By inferring conditional independencies from the variable DAG, we get that $V_1\ci V_2$ and $V_1\ci V_2 | V_3$. There does not exist a DAG that expresses this set of conditional independencies exactly. 

\begin{figure}[hbt!]
\begin{center}
\hspace{-0.3in}
\begin{tabular}{ccc}
\scalebox{0.8}{\begin{tikzpicture}
\tikzstyle{main}=[circle, minimum size = 10mm, thick, draw =black!80, node distance = 16mm]
\tikzstyle{connect}=[-latex, thick]
\tikzstyle{box}=[rectangle, draw=black!100]

\node[main, fill = white!100] (x1) [label=center:$x_1$] { };
\node[main, fill = white!100] (x2) [right=of x1, label=center:$x_2$] { };
\node[main, fill = white!100] (x3) [below=of x1, label=center:$x_3$] { };
\node[main, fill = white!100] (x4) [below=of x2, label=center:$x_4$] { };
\path (x1) edge [connect] (x3)
	(x2) edge [connect] (x4)
	(x3) edge[connect] (x4);
\node[rectangle, inner sep=0mm, fit= (x3) (x4),label=below right:$V_3$, xshift=12mm, yshift=1mm] {};
\node[rectangle, inner sep=4.4mm,draw=black!100, fit= (x3) (x4)] {};
\node[rectangle, inner sep=0mm, fit= (x1),label=below right:$V_1$, xshift=-1mm, yshift=1mm] {};
\node[rectangle, inner sep=4.4mm,draw=black!100, fit= (x1)] {};
\node[rectangle, inner sep=0mm, fit= (x2),label=below right:$V_2$, xshift=-1mm, yshift=1mm] {};
\node[rectangle, inner sep=4.4mm,draw=black!100, fit= (x2)] {};
\end{tikzpicture}} &
\scalebox{0.8}{\begin{tikzpicture}
\tikzstyle{main}=[circle, minimum size = 10mm, thick, draw =black!80, node distance = 16mm]
\tikzstyle{connect}=[-latex, thick]
\tikzstyle{box}=[rectangle, draw=black!100]

\node[main] (x1) at (0,0) {$x_1$};
\node[main]  (x2) at (2,0) {$x_2$};
\node[main]  (x3) at (4,0) {$x_3$};
\node[main]  (x4) at (1,-2) {$x_4$};
\node[main]  (x5) at (3,-2) {$x_5$};

\path (x2) edge[connect] (x1)
	edge[connect] (x5)
	(x4) edge[connect] (x1)
	(x3) edge[connect] (x5);
	
\node[rectangle, inner sep=0mm, fit= (x5) (x4),label=below right:$V_3$, xshift=9mm, yshift=1mm] {};
\node[rectangle, inner sep=4.4mm,draw=black!100, fit= (x5) (x4)] {};
\node[rectangle, inner sep=0mm, fit= (x1) (x2),label=below right:$V_1$, xshift=9mm, yshift=1mm] {};
\node[rectangle, inner sep=4.4mm,draw=black!100, fit= (x1) (x2)] {};
\node[rectangle, inner sep=0mm, fit= (x3),label=below right:$V_2$, xshift=-1mm, yshift=1mm] {};
\node[rectangle, inner sep=4.4mm,draw=black!100, fit= (x3)] {};
\end{tikzpicture}}
&
\scalebox{0.8}{\begin{tikzpicture}
\tikzstyle{main}=[minimum size = 10mm, thick, draw =black!80, node distance = 16mm]
\tikzstyle{connect}=[-latex, thick]

\node[main] (v1) at (0,0) {$V_1$};
\node[main] (v2) at (3,0) {$V_2$};
\node[main] (v3) at (1.5,-1.5) {$V_3$};

\path (v1) edge[connect] (v3)
	(v2) edge[connect] (v3);
\end{tikzpicture}} 
\\
(a) & (b) & (c)
\end{tabular}
\caption{(a) A variable DAG where conditional independencies among groups $V_1$, $V_2$, and $V_3$ cannot be expressed exactly using any DAG. (b) A causal variable DAG where conditional independencies among groups $V_1$, $V_2$, and $V_3$ lead to a group DAG in which v-structures cannot be interpreted causally. (c) A group DAG corresponding to causal variable DAG in (b).
\label{fig:group_faithful1}
}
\end{center}
\end{figure}
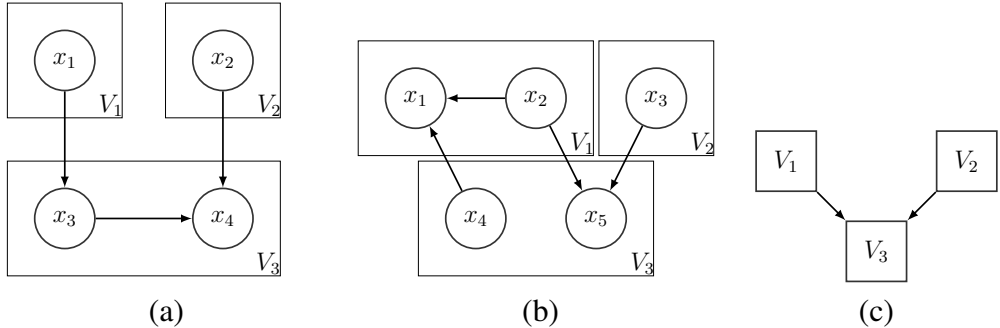

To avoid cases where conditional independencies are not representable by any group DAG, we introduce a new assumption: groupwise faithfulness. Formally, we define groupwise faithfulness as follows.

\begin{assumption}[Groupwise faithfulness] \label{assump:gf}
A distribution $p$ is groupwise faithful to a group DAG $H$ given a
grouping $W$, if $H$ implies the exactly same set of conditional
independencies as $p$ over the groups $W$. 
\end{assumption} 

Note that this assumption is analogous with the faithfulness assumption in the sense that in both cases there exists a DAG that expresses exactly the independencies in the distribution.

Sometimes it is convenient to investigate whether conditional independencies implied by a variable DAG given a grouping are equivalent to the conditional independencies implied by a group DAG. We will use this notion later in this section when we investigate the strength of the groupwise faithfulness assumption.

\begin{assumption}[Groupwise Markov equivalence] \label{assump:sgf}
A variable DAG $G$ is groupwise Markov equivalent to a group DAG $H$ given a grouping $W$, if $H$ implies the exactly same set of conditional independencies as $G$ over groups $W$. 
\end{assumption}

We note that if a distribution $p$ is faithful to a DAG $G$, and $G$ is groupwise Markov equivalent to a DAG $H$ given a grouping $W$, then $p$ is groupwise faithful to $H$ given $W$. This shows that faithfulness and groupwise Markov equivalence together imply groupwise faithfulness. However, neither faithfulness nor groupwise Markov equivalence alone is necessary or sufficient for groupwise faithfulness. 

To see this, let us consider the following examples. First, to see that
faithfulness is not sufficient for groupwise faithfulness, assume that
we have a distribution that is faithful to the DAG in
Figure~\ref{fig:group_faithful1}(a). Given groups $V_1$, $V_2$, and $V_3$,
the distribution is groupwise unfaithful. Second, consider a
distribution over the variable set $x_1$, $x_2$, $x_3$, $x_4$, and
$x_5$. Let us assume that the groups are $V_1 = \{x_1, x_2\}$, $V_2 =
\{x_3\}$, and $V_3 = \{x_4, x_5\}$ and the Bayesian network factorizes
according to the variable DAG in Figure~\ref{fig:group_faithful1}(b). Now, it is possible to construct a distribution such that the local conditional distribution at node $x_1$ is
exclusive or (XOR), and thus the variable DAG is unfaithful. If the other local conditional distributions do not introduce any additional independencies then the distribution is groupwise faithful. This shows
that faithfulness is not necessary for groupwise faithfulness. Next,
let us consider the same structure but let us assume that both $x_1$
and $x_5$ are associated with XOR distributions. In this case the
variable DAG is groupwise Markov equivalent to the group DAG but the distribution is not groupwise
faithful which shows that groupwise Markov equivalence is not
sufficient for groupwise faithfulness. Finally, consider the variable
DAG and the grouping in Figure~\ref{fig:group_faithful1}(a). This variable DAG is
not groupwise Markov equivalent to the group DAG given the grouping. However, if the distribution is
unfaithful to the DAG and the variables $x_1$ and $x_3$ are independent then
the distribution is groupwise faithful. This shows that 
groupwise Markov equivalence is not necessary for groupwise faithfulness. 
As neither faithfulness nor groupwise Markov equivalence is sufficient or necessary for groupwise faithfulness, it follows that groupwise faithfulness implies neither faithfulness nor groupwise Markov equivalence. 

As neither faithfulness nor structural groupwise faithfulness is sufficient or necessary for groupwise faithfulness, it follows that groupwise faithfulness implies neither faithfulness or structural groupwise faithfulness. 

We  have also studied whether groupwise faithfulness together with certain kinds of group DAGs or groupings imply faithfulness. It turns out that groupwise faithfulness implies faithfulness only when the maximum group size is one and in some special cases when the maximum group size is two  as stated in the theorems below; the proofs of the theorems are found in \ref{app:proofs}.

\begin{restatable}{theorem}{thmimply} \label{thm:imply}
Let $H$ be a group DAG on a grouping $W$. Then every distribution $p$ on $\cup W_i$ that is groupwise faithful to $H$ given $W$ is faithful to some variable DAG on $\cup W_i$ if  $\max_{W_i\in W} |W_i| = 1$ or $\max_{W_i\in W} |W_i| = 2$ and no group of size 2 has neighbors in $H$.
\end{restatable}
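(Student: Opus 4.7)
The plan is to handle the two cases separately.

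\textbf{Case (i): }$\max_{W_i\in W}|W_i|=1$. Here every group is a singleton, so $W$ is in natural bijection with $N$ and $H$ may be identified with a DAG $G$ on $N$. Every conditioning set $S\subseteq N\setminus\{u,v\}$ is automatically a union of singleton groups, so the groupwise d-separation statements of $H$ coincide exactly with the d-separation statements of $G$. Groupwise faithfulness of $p$ to $H$ is then literally faithfulness of $p$ to $G$.

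\textbf{Case (ii): }$\max_{W_i\in W}|W_i|=2$ with each size-$2$ group isolated in $H$. The first step is to observe that if $W_i$ is isolated in $H$ then $H$ d-separates $W_i$ from $\bigcup_{j\ne i}W_j$ given the empty set (and given any union of other groups), so by groupwise faithfulness $W_i\ci\bigcup_{j\ne i}W_j$ in $p$. Iterating over the isolated size-$2$ groups $W_{i_1},\dots,W_{i_m}$ and writing $C$ for the union of all singleton groups yields the factorization
\[
p(N)\;=\;p_{\mathrm{core}}(C)\prod_{l=1}^{m}p_l(W_{i_l}),
\]
so conditioning on variables in one factor does not affect the others.

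Next I would build a candidate variable DAG $G$ as the disjoint union of the subgraph $H_{\mathrm{core}}$ of $H$ induced by the singleton groups (viewed as a DAG on $C$), together with, for each isolated pair $W_{i_l}=\{u_l,v_l\}$, a single arc $u_l\to v_l$ if $u_l\not\ci v_l$ in $p_l$ and no arc otherwise. Each two-node component is faithful to its marginal: with only two variables the only admissible conditioning set is empty, so ``arc present iff marginally dependent'' is exactly faithfulness. For the core, any conditioning set $S\subseteq C$ is already a union of singleton groups, so groupwise faithfulness of $p$ to $H$ applied to the singleton groups $\{u\},\{v\}$ gives $u\ci v\mid S$ in $p$ iff $u,v$ are d-separated by $S$ in $H$; since no path in $H$ between singleton nodes visits an isolated size-$2$ group, this is equivalent to d-separation in $H_{\mathrm{core}}$, and by the factorization also to $u\ci v\mid S$ in $p_{\mathrm{core}}$.

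Finally, I would glue these faithfulness statements together. For arbitrary $u,v\in N$ and $S\subseteq N\setminus\{u,v\}$, split $S$ according to the components of $G$. Because the components are probabilistically independent and disconnected in $G$, conditioning on variables outside the component(s) containing $u$ and $v$ alters neither the conditional independence in $p$ nor the d-separation status in $G$, reducing the query to a within-component one already handled. The main obstacle is exactly this gluing step: one must verify that arbitrary conditioning sets (not merely unions of groups) behave correctly, and it is precisely the factorization forced by the isolation hypothesis that makes this work; this is also why the argument breaks once a size-$2$ group is allowed to have a neighbor in $H$.
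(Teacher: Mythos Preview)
Your proposal is correct and follows essentially the same approach as the paper: split into the two cases, identify $H$ with a variable DAG when all groups are singletons, and in the second case use the isolation of the size-$2$ groups to factor $p$ into independent blocks and build the variable DAG as a disjoint union of the singleton-induced subgraph and appropriately chosen two-node components. Your version is in fact more complete than the paper's, which asserts the final ``thus $p$ is faithful to a DAG'' without spelling out the gluing over arbitrary (non-group-union) conditioning sets; your explicit factorization argument fills that in.
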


\begin{restatable}{theorem}{thmimplyb} \label{thm:imply2}
Let $H$ be a group DAG on a grouping $W$. If $\max_{W_i\in W}
|W_i|\geq 3$, or $\max_{W_i\in W} |W_i| = 2$ and two groups of size $2$
are adjacent in the group DAG, then there exists a distribution $p$
such that $p$ implies the same set of groupwise conditional
independencies as $H$ on $W$ and $p$ is not faithful to any DAG.
\end{restatable}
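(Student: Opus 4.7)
The plan is, for each $H$ meeting the hypothesis, to construct a distribution $p$ that matches the groupwise conditional independencies of $H$ and contains an embedded XOR triple $(u,v,w)$ with $u,v$ independent uniform binary and $w=u\oplus v$. Such a triple is pairwise independent but jointly dependent, and d-separation satisfies the \emph{composition} property: $X\ci Y|Z$ and $X\ci W|Z$ in a DAG imply $X\ci (Y\cup W)|Z$. Hence no DAG is faithful to $p$: from $u\ci v$ and $u\ci w$ in $p$, faithfulness would force $u\ci (v,w)$ in $p$, contradicting the identity $u=v\oplus w$.

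To build $p$, first construct a variable DAG $G$ on $\bigcup W_i$ that is groupwise Markov equivalent to $H$: pick a representative $r_i\in W_i$ per group, put an arc $r_i\to r_j$ in $G$ exactly when $W_i\to W_j$ in $H$, and make every non-representative a leaf whose only parent is its $r_i$. A routine d-separation check---using that between-group paths in $G$ must pass through representatives and that non-representative leaves carry no cross-group information---shows $W_i\ci W_j|W_S$ in $G$ iff $r_i\ci r_j|r_S$ in $G$ iff $W_i\ci W_j|W_S$ in $H$. Take a distribution $p_0$ on binary variables faithful to $G$ in which those representatives that will later play the role of $u$ are marginally uniform (a single linear constraint per such representative, which does not collapse any arc). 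Now modify $p_0$ into $p$ by implanting the XOR triple. If some $|W_i|\ge 4$, pick three non-representatives $u,v,w\in W_i$ and redefine them so that $u,v$ are independent uniform binary with $w=u\oplus v$, jointly independent of every variable outside $\{u,v,w\}$. If $|W_i|=3$, set $u=r_i$ and let $v,w$ be the two non-representatives of $W_i$ with $v$ independent uniform binary and $w=u\oplus v$. If the hypothesis gives two adjacent size-$2$ groups $W_a\to W_b$, set $u=r_a$, let $v$ be the non-representative in $W_a$ (independent uniform binary), and set the non-representative $w$ in $W_b$ to $u\oplus v$.

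The remaining task---the only real obstacle---is to verify that the groupwise CIs of $p$ still coincide with those of $H$. The key observation is that $v$ is independent of every variable outside $\{u,v,w\}$; hence $w$ conditioned on any set of variables that does not jointly determine $u$ and $v$ remains uniform on $\{0,1\}$ and independent of those variables (the uniform $v$ averages out), so $w$ contributes no cross-group information beyond what $u$ already carries in $p_0$. Whenever the conditioning set $W_S$ does jointly fix $u$ and $v$, $w$ becomes a deterministic function of $W_S$ and again creates neither new independences nor new dependences. A brief case split on whether $W_a$ or $W_b$ (in Case 2), or $W_i$ (in Case 1), lies in $S$ then reduces $W_i\ci W_j|W_S$ in $p$ to $r_i\ci r_j|r_S$ in $G$, which by the first step equals $W_i\ci W_j|W_S$ in $H$. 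Combined with the composition argument above, this shows $p$ is groupwise faithful to $H$ yet faithful to no DAG.
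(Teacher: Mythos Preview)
Your proposal is correct and follows essentially the same approach as the paper: encode $H$ via a ``representative'' subgraph on one chosen vertex per group and then implant an XOR triple to destroy faithfulness while leaving the groupwise conditional independencies intact. The paper proves the theorem via two short lemmas (one for $\max_i|W_i|\ge 3$, one for two adjacent size-$2$ groups) using exactly this recipe.

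There are only cosmetic differences. In the paper's base DAG the non-representatives are simply absent except for the two or three nodes carrying the XOR, whereas you first attach every non-representative as a leaf of its representative and then overwrite some of them; both yield groupwise Markov equivalence to $H$. The XOR placement also differs slightly: for two adjacent size-$2$ groups $W_1\to W_2$, the paper takes inputs $u\in W_1\setminus\{w_1\}$ and $w_2\in W_2$ with output $v\in W_2\setminus\{w_2\}$, while you take both inputs in $W_a$ and the output in $W_b$; for $|W_i|\ge 3$ the paper places the whole XOR inside $W_i$ with the representative as one input, which matches your $|W_i|=3$ sub-case, and your separate $|W_i|\ge 4$ sub-case (three fresh non-representatives) is an unnecessary but harmless simplification. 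Your explicit appeal to the composition property of d-separation makes the ``$p$ is faithful to no DAG'' step cleaner than the paper's terse invocation of the well-known unfaithfulness of XOR; incidentally, because your $v$ is uniform, $u\ci w$ already holds without forcing $u=r_i$ to be marginally uniform, so that extra constraint is not actually needed for your argument.
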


Note that there is a ``gap'' between the above theorems; we do not know whether or not groupwise faithfulness implies faithfulness when the maximum group size is $2$ and the groups of size $2$ have neighbors of size $1$.

Next, we will explore how strong the groupwise faithfulness assumption is. That is, how likely we are to encounter groupwise faithful distributions. To this end, we consider distributions that are faithful to variable DAGs.
The joint space of DAGs and groupings is too large to be enumerated and we are not aware of any formula for assessing the number of groupwise unfaithful networks. Therefore, we analyze the prevalence of groupwise faithfulness by an empirical evaluation using simulations.

In simulations, a key question is how to check groupwise faithfulness. That is, given a variable DAG and a grouping, how to check whether the conditional independencies entailed by the variable DAG over groups can be represented exactly using a group DAG. Because the data-generating distribution is faithful to a variable DAG, we check whether the variable DAG over groups is groupwise Markov equivalent to some group DAG. This can be done by first using the PC algorithm \citep{spirtes00} to construct a group DAG; here we use d-separation in the variable DAG as our independence test. Once the group DAG has been constructed we can check that the set of conditional independencies entailed by the group DAG is exactly the set of groupwise conditional independencies implied by the variable DAG and the grouping. The PC algorithm is sound and complete so if there exists a DAG that implies exactly  the set of given conditional independencies, then the PC algorithm returns (the equivalence class of) that DAG. Thus, the conditional independencies match if and only if the variable DAG and the grouping are groupwise Markov equivalent to a group DAG.

We used the Erd\H{o}s-R\'enyi model \citep{erdos59, gilbert59} to generate random DAGs. A DAG
from model $G(n, p)$ has $n$ nodes and each arc is included with
probability $p$ independently of all other arcs; to get an acyclic
directed graph, we fix the order of nodes.  We generated random DAGs
with $n =20$ by varying the parameter $p$ from $0.1$ to $0.9$. We also
generated random groupings where group size was fixed to 2, 3, 4, or 5
(20 is not divisible by 3, so in this case one group is smaller than
the others). For each value of $p$, we generated 100 random graphs. Then,
we generated 10 groupings for each graph for each group size and
counted the proportion of groupwise faithful DAG-grouping pairs. The
results are shown in Figure~\ref{fig:gf_simu}. It can be seen that
groupwise unfaithfulness is probable with sparse graphs and small
group sizes. One should, however, note that the simulation results are
for random graphs and groupings, and real life graphs and groupings may
or may not follow this pattern.

\begin{figure}[hbt!]
\begin{center}
\hspace{-0.3in}
\includegraphics[width=0.6\linewidth]{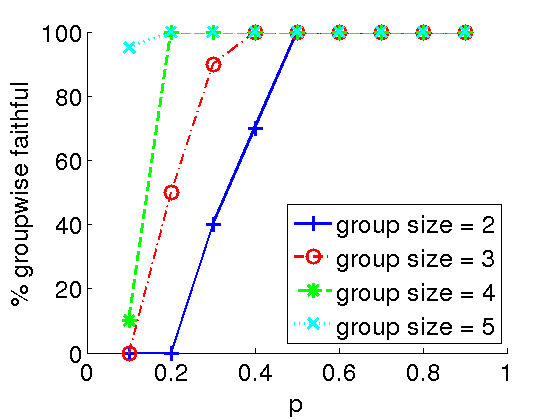}  
\caption{Proportion of DAG-grouping pairs that are groupwise faithful in random graphs of 20 nodes. Parameter $p$ is the probability that an arc is present.
\label{fig:gf_simu}
}
\end{center}
\end{figure}

\subsection{Causal Interpretation} \label{sec:causality}

Probabilistic causation between variables is typically defined to concern predicting effects of interventions. This means that an external manipulator intervenes the system and forces certain variables to take certain values. In our context, we say that a group $V$ {\em causes} group $U$ if intervening on $V$ affects the joint distribution of $U$.

While the above definition does not require the distribution to be of any particular form, we concentrate on our analysis on distributions that can be represented using causal DAGs.
A DAG is called {\em causal} if it satisfies the {\em causal Markov condition}, that is, all variables are conditionally independent of their non-effects given their direct causes. Assuming faithfulness and causal sufficiency (if any pair of observed variables has a common cause then it is observed), it is possible to identify causal effects  using the {\em do}-operator  \citep{pearl00}. The {\em do}-operator $do(v = v_1)$ sets the value of the variable $v$ to be $v_1$. The probability $P(u | do(v = v_1))$ is the conditional probability distribution of $u$ given that the variable $v$ has been forced to take value $v_1$. In other words, one takes the original joint distribution, removes all  arcs that head to $v$ and sets $v=v_1$; then one computes the probability $P(u | v=v_1)$ in the new distribution. 
We define a cause using the so-called operational criterion for causality \citep{aliferis10a}, that is, we say that a variable $v$ is a {\em cause} (direct or indirect) of a variable $u$ if and only if $P(u | do(v = v_1)) \neq P(u| do(v = v_2))$ for some values $v_1$ and $v_2$. 
A straightforward generalization leads to the following definition of causality for variable groups.

\sloppy
\begin{definition}[Group causality]
Assuming that $P$ is a causal Bayesian network and given variable groups $V$ and $U$, $V$ is a cause of $U$ if $P(U | do(V = V_1)) \neq P(U | do(V = V_2))$ for some instantiations $V_1$ and $V_2$ of values of $V$.
\end{definition}

Note that the above definition allows causal cycles between groups. To see this, consider a causal DAG on $\{v_1, v_2, v_3, v_4\}$ which has arcs $v_1 v_3$ and $v_4 v_2$. If there are two groups $W_1 = \{v_1,  v_2\}$ and $W_2 = \{v_3,  v_4\}$ then $W_1$ is a cause of $W_2$ (because there is a causal arc $v_1 v_3$) and $W_2$ is a cause of $W_1$ (because of a causal arc $v_4 v_2$).

In the above, we assumed that the variable DAG is causal. An alternative scenario is to assume both the group DAG and the variable DAG are causal.
This results in the following, stronger definition of causality which does not allow causal cycles between groups.

\begin{definition}[Strong group causality] \label{def:strong_causality}
Assuming that $P$ is a causal Bayesian network and given variable groups $V$ and $U$, $V$ is a strong cause of $U$ if $V$ is a cause of $U$ and $U$ is not a cause of $V$ in $P$.
\end{definition}

Next, we will study to what extent causality between variable groups can be detected from observational data using only conditional independencies among groups. We assume that the data come from a distribution that is faithful to a causal variable DAG. Further, we assume that we have no access to the raw data but only to an oracle that conducts conditional independence tests. 
Formally, we assume that we have access to an oracle $\mathcal{O}_G$ that answers queries $W_i \ci W_j | S$, where $W_i, W_j \in W$ and $S= \cup_l T_l$ with $T = \{T_1, \ldots T_m\}\subseteq W\setminus \{W_i, W_j\}$. Note that in the standard scenario with conditional independencies between variables, we have an oracle $\mathcal{O}_V$ that answers queries $X\ci Y | Z$, where $X, Y\in N $ and $Z\subseteq N\setminus \{X, Y\}$; If $\max_i |W_i| > 1$ then the oracle $\mathcal{O}_V$ is strictly more powerful than $\mathcal{O}_G$.

It is well-known that, under standard assumptions, a causal variable DAG can be learned up to the Markov equivalence class. A Markov equivalence class can be represented by a completed partial DAG (CPDAG) where we have both directed and undirected edges. Directed edges or arcs are the edges that point to the same direction in every member of the equivalence class whereas undirected edges express cases where the edge is not directed to the same direction in all members of the equivalence class. If there is a directed path from a variable $v$ to a variable $u$ in the CPDAG then $v$ is a cause of $u$. In other words, existence of such a path is a sufficient condition for causality. However, it is not a necessary condition and it is possible that $v$ is a cause of $u$ even when there is no directed path from $v$ to $u$ in the CPDAG.

Next, we consider causality in the group context. 
Manipulating an ancestor of a node affects its distribution and thus the ancestor is a cause of its descendant. It is easy to see that given a causal variable DAG $G$, a group $W_i$ is a group cause of a group $W_j$ if and only if there is at least one directed path from $W_i$ to $W_j$ in $G$, that is, there exists $v\in W_i$ and $u\in W_j$ such that there is a directed path from $v$ to $u$. 
It is clear from the above that a sufficient condition for a group $W_i$ to be a  group cause of  a group $W_j$ is that there is at least one directed path from $W_i$ to $W_j$ in the CPDAG.

Standard constraint-based algorithms for causal learning start by constructing a skeleton and then directing arcs based on a set of rules. So let us take a look on these rules in the group context. The first rule is to direct v-structures.
The following theorem shows that arcs that are part of a v-structure in a group DAG imply group causality.

\begin{theorem} \label{thm:weak_v}
Let $N$ be a node set and $W$ a grouping on $N$. Let  $p$ be a distribution that is groupwise faithful to some group DAG $H$ given the grouping $W$. If there exist groups
$W_i,W_j, W_k\in W$  such that (i)  $W_i\ci W_k | S$ for some  $S\subseteq W\setminus\{W_i, W_j, W_k\}$ and (ii) $W_i\not\ci W_k | (\cup_l T_l) \cup W_j$ for all $T = \{T_1, \ldots, T_m\}\subseteq W\setminus\{W_i, W_j, W_k\}$ then $W_i$ is a group cause of $W_j$.
\end{theorem}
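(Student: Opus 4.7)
The plan is to work at the variable level using faithfulness of $p$ to the variable DAG $G$. By faithfulness, condition (i) says every pair $v \in W_i, u \in W_k$ is d-separated by the variables of $S$ in $G$, and condition (ii) says that for every $T$ there exist such a pair $(v,u)$ d-connected by the variables of $(\cup_l T_l) \cup W_j$ in $G$. Instantiating (ii) at $T = S$ and combining with (i) gives some $v \in W_i$, $u \in W_k$ and a path $\pi$ in $G$ that is d-connecting given $S \cup W_j$ but blocked given $S$; adding $W_j$ to the conditioning set must therefore activate a collider on $\pi$, so $\pi$ contains a collider $z$ with either $z \in W_j$ or $z$ an ancestor of some element of $W_j$ in $G$.

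Next I would show that the sub-path $\pi_{vz}$ from $v$ to $z$ along $\pi$ is a directed path $v \to \cdots \to z$ in $G$. If it is not, it has an interior fork (non-collider source) at some variable $s$ belonging to a group $W_s$. Since $s$ is an interior non-collider on a $(S \cup W_j)$-d-connecting path, $W_s \notin S$ and $W_s \neq W_j$. Three subcases arise. If $s \in W_i$, then $s$ is an ancestor of $z$ in $G$ and hence of some variable in $W_j$, proving the theorem directly. If $s \in W_k$, the directed segment $s \to \cdots \to v$ is a variable-level path from a $W_k$-variable to the $W_i$-variable $v$ having no interior node in $S \cup W_j$; but (i) gives $v \ci s | S$, so $S$ must block this segment at some interior non-collider, contradicting $\pi$ being d-connecting. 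Otherwise $W_s \in W \setminus \{W_i, W_j, W_k\}$, and applying condition (ii) with the enlarged $T' = S \cup \{W_s\}$ produces a new d-connecting path $\pi'$ to which the same case analysis applies. Iterating by absorbing each new offending source-group into $T$, a process that must terminate since $W$ is finite, the conditioning set eventually grows to $N \setminus (W_i \cup W_k)$; then any remaining interior non-collider, in particular any remaining fork, must lie in $W_i \cup W_k$, forcing us into one of the first two subcases.

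Hence $\pi_{vz}$ is directed, $v \in W_i$ is an ancestor of $z$ in $G$, and $z$ is either in $W_j$ or has a descendant in $W_j$. In either case some variable of $W_i$ is an ancestor of some variable of $W_j$ in $G$, so by the definition of group causality, $W_i$ is a group cause of $W_j$.

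The main obstacle is the iterative subcase: controlling the new d-connecting path $\pi'$ produced by invoking condition (ii) after enlarging $T$. One must verify that the collider-activation on $\pi'$ can consistently be traced to $W_j$ rather than to the freshly added group $W_s$, so that the same fork-elimination analysis still applies; and one must argue that the termination of the iteration really places the fork in $W_i \cup W_k$. The finiteness of $W$ together with $W_j$ always remaining in the conditioning set handle these points, but the careful bookkeeping is the delicate part of the proof.
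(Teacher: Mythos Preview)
Your approach is in the right spirit, but the case analysis on the fork $s$ has a concrete gap. You select $z$ as a collider on $\pi$ activated by $W_j$ (but not by $S$); however, $\pi$ may also carry colliders that are already activated by $S$, and some of these can lie strictly between $v$ and $z$. Hence the sub-path $\pi_{vz}$ is in general \emph{not} collider-free in its interior, so a fork $s$ on $\pi_{vz}$ need not be an ancestor of $z$ along $\pi$, and the segment from $s$ back to $v$ need not be directed either. Both the ``$s\in W_i$'' and ``$s\in W_k$'' subcases therefore fail as written. You could try to repair this by choosing $z$ to be the collider on $\pi$ closest to $v$ (then $\pi_{vz}$ really is collider-free and has at most one fork, which is an ancestor of both endpoints), but then nothing guarantees that this particular $z$ has a descendant in $W_j$, and you are thrown back onto the iterative bookkeeping you already flagged as delicate. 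That bookkeeping problem is real: after enlarging $T$ by $W_s$, the collider on the new path $\pi'$ that is newly activated relative to $S$ may owe its activation to $W_s$ rather than to $W_j$, so the fork traced from it lands in $W_s$, not $W_j$; your finiteness argument does not rule this out.

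The paper avoids all of this by applying condition~(ii) once, directly with the \emph{maximal} choice $T=W\setminus\{W_i,W_j,W_k\}$, so that the conditioning set is all of $N\setminus(W_i\cup W_k)$. On the resulting d-connecting path every non-collider must then lie in $W_i\cup W_k$; after truncating so that no intermediate node lies in $W_i\cup W_k$, every intermediate node is a collider, which forces the path to have length exactly two, $v\to t\leftarrow u$ with $v\in W_i$ and $u\in W_k$. The argument then reduces to showing that $t$ (or a descendant of $t$) lies in $W_j$, with no iteration and no fork case-split. Your iteration, if pushed to termination, would in effect rediscover this maximal-$T$ path; it is much cleaner to go there directly.
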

\begin{proof}
It is sufficient to show that there exists a pair $w_i\in W_i$ and $w_j\in W_j$ such that $w_i$ is an ancestor of $w_j$ in the variable DAG.

Due to (i), all paths that go from $W_i$ to $W_k$  without visiting $S$ must have a head-to-head node. Due to (ii) there has to exist at least one path between $W_i$ and $W_k$ such that there are no non-head-to-head nodes in $W\setminus\{W_i, W_k\}$ and all head-to-head nodes are unblocked by $W_j$; let us denote one such a path by $R$. Without loss of generality, we can assume that all nodes in $R$ except the endpoints are in $W\setminus \{W_i, W_k\}$. Let $s, t, u\in N$ be three consecutive nodes in path $R$ such that there are edges $st$ and $ut$. Nodes $s$ and $u$ cannot be head-to-head nodes along $R$ and therefore $s, u\in W_i\cup W_k$. Node $t$ is a head-to-head node and therefore either $t\in W_j$ or $t$ has a descendant in $W_j$. In both cases there is a directed path from both $s$ and $u$ to the set $W_j$. The path $R$ has one end-point in $W_i$ and another in $W_k$. Thus, there is a directed path from $W_i$ to $W_j$ in the variable DAG.
\end{proof}

Note that the proof of the previous theorem implies that there is a v-structure $W_i \rightarrow W_j \leftarrow W_k$ in the group DAG only if there exists $w_i\in W_i$, $w_j\in W_j$, and $w_k\in W_k$ such that there exists a v-structure $w_i \rightarrow w_j \leftarrow w_k$ in the variable DAG.

After v-structures have been directed, one can direct the rest of the edges that point to the same direction in every DAG of the Markov equivalence class using four local rules often referred to as the Meek rules \citep{meek95b}. The rules are \citep{pearl00}:
\begin{enumerate}
\item[R1:] Orient $v - s$ into $v\rightarrow s$ if there is an arrow $u\rightarrow v$ such that $u$ and $s$ are nonadjacent. 
\item[R2:] Orient $u-v$ into $u\rightarrow v$ if there is a chain $u\rightarrow s\rightarrow v$.
\item[R3:] Orient $u-v$ into $u\rightarrow v$ if there are two chains $u - s \rightarrow v$ and $u - t \rightarrow v$ such that $s$ and $t$ are nonadjacent.
\item[R4:] Orient $u- v$ into $u\rightarrow v$ if there are two chains $u - s\rightarrow t$ and $s \rightarrow t \rightarrow v$ such that $s$ and $v$ are nonadjacent and $u$ and $t$ are adjacent.
\end{enumerate}

We would like to generalize these rules for variable groups. However, these rules are not sufficient  to infer group causality if one does have access only to the groupwise conditional independencies (and to nothing else). To see this, consider a group DAG $H = (W, E)$ where $W = \{S, T, U, V\}$ and $E = \{SU, TU, UV\}$ shown in Figure~\ref{fig:causal1}(a). Now, Theorem~\ref{thm:weak_v} says that $S$ and $T$ are causes of $U$. The rule R1 suggest that we could claim that $U$ is a cause of $V$. However, we can construct a causal variable DAG $G = (N, F)$ with $N = \{s_1, s_2, t_1, t_2, u_1, u_2, u_3, v_1, v_2\}$ and $F = \{u_1s_1, v_1u_1, s_2u_2, t_1u_2, v_2u_3, u_3t_2 \}$ and $S = \{s_1, s_2\}$, $T = \{t_1, t_2\}$, $U = \{u_1, u_2, u_3\}$, and $V = \{v_1, v_2\}$; see Figure~\ref{fig:causal1}(b). Clearly, $G$ implies the same conditional independencies on $W$ as does $H$ and there is no directed path from $U$ to $V$ in $G$. Thus, $U$ is not a cause of $V$ in $G$.

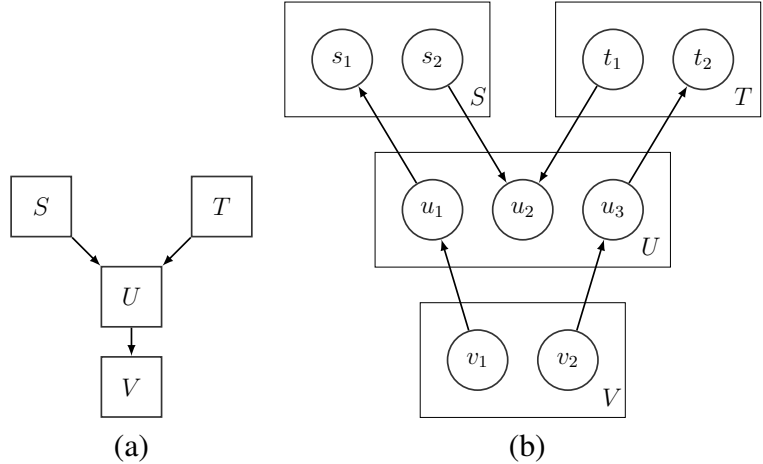
\begin{figure}[hbt!]
\begin{center}
\hspace{-0.3in}
\begin{tabular}{cc}
\scalebox{0.8}{\begin{tikzpicture}
\tikzstyle{main}=[minimum size = 10mm, thick, draw =black!80, node distance = 16mm]
\tikzstyle{connect}=[-latex, thick]

\node[main] (S) at (0,0) {$S$};
\node[main] (T) at (3,0) {$T$};
\node[main] (U) at (1.5,-1.5) {$U$};
\node[main] (V) at (1.5,-3) {$V$};

\path (S) edge[connect] (U)
	(T) edge[connect] (U)
	(U) edge[connect] (V);
\end{tikzpicture}} 
&
\scalebox{0.8}{\begin{tikzpicture}
\tikzstyle{main}=[circle, minimum size = 10mm, thick, draw =black!80, node distance = 16mm]
\tikzstyle{connect}=[-latex, thick]
\tikzstyle{box}=[rectangle, draw=black!100]

\node[main] (s1) at (0,0) {$s_1$};
\node[main]  (s2) at (1.5,0) {$s_2$};
\node[main]  (t1) at (4.5,0) {$t_1$};
\node[main] (t2) at (6,0) {$t_2$};
\node[main]  (u1) at (1.5,-2.5) {$u_1$};
\node[main]  (u2) at (3,-2.5) {$u_2$};
\node[main]  (u3) at (4.5,-2.5) {$u_3$};
\node[main]  (v1) at (2.25,-5) {$v_1$};
\node[main]  (v2) at (3.75,-5) {$v_2$};

\path (s2) edge[connect] (u2)
	(t1) edge[connect] (u2)
	(v1) edge[connect] (u1)
	(u1) edge[connect] (s1)
	(v2) edge[connect] (u3)
	(u3) edge[connect] (t2);
	
\node[rectangle, inner sep=0mm, fit= (s1)(s2),label=below right:$S$, xshift=7mm, yshift=2mm] {};
\node[rectangle, inner sep=4.4mm,draw=black!100, fit= (s1) (s2)] {};
\node[rectangle, inner sep=0mm, fit= (t1) (t2),label=below right:$T$, xshift=6mm, yshift=2mm] {};
\node[rectangle, inner sep=4.4mm,draw=black!100, fit= (t1) (t2)] {};
\node[rectangle, inner sep=0mm, fit= (u1)(u2)(u3),label=below right:$U$, xshift=13mm, yshift=2mm] {};
\node[rectangle, inner sep=4.4mm,draw=black!100, fit= (u1)(u2)(u3)] {};
\node[rectangle, inner sep=0mm, fit= (v1) (v2),label=below right:$V$, xshift=6.5mm, yshift=2mm] {};
\node[rectangle, inner sep=4.4mm,draw=black!100, fit= (v1) (v2)] {};
\end{tikzpicture}}\\
(a) & (b) \\
\end{tabular}
\caption{(a) A group DAG and (b) a causal variable DAG that implies the same groupwise independencies.
\label{fig:causal1}
}
\end{center}
\end{figure}

The above observation implies that the Meek rules cannot be used to infer causality in group DAGs. However, it is not known whether there are some special conditions under which the Meek rules would apply in this context. Note that the above applies only when the conditional independencies between individual variables are not known; when the variable DAG is known, this information can be used to help to infer more causal relations. 

Let us analyze detecting strong group causality.
The theorem below shows that none of the arcs in the group DAG imply strong group causality if minimum group size is at least 2. 

\begin{theorem} \label{thm:strong_causality}
We are given a node set $N$, a grouping $W$, and a group DAG $H$. If $|W_i| > 1$ for all $i$ then $W_j$ being  an ancestor of $W_k$ in $H$  does not imply that $W_j$ is a strong group cause of $W_k$.
\end{theorem}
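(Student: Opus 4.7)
The plan is to exhibit a small counterexample in which $W_j$ is an ancestor of $W_k$ in the group DAG $H$, yet the underlying causal variable DAG $G$ realizes the same groupwise conditional independence pattern \emph{and} also makes $W_k$ a group cause of $W_j$, so that strong group causality from $W_j$ to $W_k$ fails. Since the theorem only asserts that the implication can fail under the hypothesis $|W_i|>1$ for all $i$, a single well-chosen instance suffices.

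Concretely, I would take two groups of size exactly two, $W_j=\{v_1,v_2\}$ and $W_k=\{v_3,v_4\}$, and let $H$ consist of the single arc $W_j\to W_k$. This already places $W_j$ as an ancestor of $W_k$ in $H$. For $G$ I would pick the variable DAG on $\{v_1,v_2,v_3,v_4\}$ whose only arcs are $v_1\to v_3$ and $v_4\to v_2$. The point is that these two arcs cross between the groups in opposite directions, so $v_1\in W_j$ is an ancestor of $v_3\in W_k$ while, symmetrically, $v_4\in W_k$ is an ancestor of $v_2\in W_j$.

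I would then verify three things. (i) Some distribution $p$ faithful to $G$ is groupwise faithful to $H$ given $W$: because there are only two groups, the only possible groupwise CI query has conditioning set $S=\emptyset$, and both $H$ (arc present) and a generic parameterization of $G$ (via the path $v_1\to v_3$, or equally the path $v_4\to v_2$) declare $W_j$ and $W_k$ marginally dependent, so groupwise faithfulness reduces to this single check. (ii) $W_j$ is a group cause of $W_k$ in $G$: by the operational criterion, $do(v_1=a)$ alters the marginal of $v_3\in W_k$. (iii) Symmetrically, $W_k$ is a group cause of $W_j$ in $G$, since $do(v_4=b)$ alters the marginal of $v_2\in W_j$. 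By Definition~\ref{def:strong_causality}, (ii) and (iii) together preclude $W_j$ from being a strong group cause of $W_k$, which is the desired contradiction with the claimed implication.

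The only real subtlety, rather than a genuine obstacle, is ensuring the example is not vacuous: one must argue that the parameters of $G$ can be chosen so that $W_j$ and $W_k$ remain marginally dependent (avoiding accidental cancellations) and that no extra groupwise independencies emerge. Both points are immediate in the two-group case, since there is only one groupwise CI query to consider. If one wishes the grouping to contain more than two groups, it is enough to pad the construction with additional groups of size at least two whose variables are mutually independent roots in $G$ and disconnected from $W_j\cup W_k$; this preserves every groupwise CI statement used above and the rest of the argument goes through unchanged.
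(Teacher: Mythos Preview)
Your core mechanism---placing arcs in both directions between the two groups so that each becomes a group cause of the other---is exactly the device the paper uses. The gap is in the quantifier structure. The opening ``We are given a node set $N$, a grouping $W$, and a group DAG $H$'' is a universal hypothesis: the theorem asserts that for \emph{every} such $H$ with all groups of size at least two and \emph{every} ancestor pair $W_j,W_k$ in $H$, the ancestor relation fails to guarantee strong group causality, because there is always some causal variable DAG, groupwise Markov equivalent to $H$, in which $W_k$ is also a group cause of $W_j$. The paper's proof states this explicitly (``it is sufficient to show that for any group DAG $H$ and a grouping $W$ with $|W_i|>1$ for all $i$ there exists a causal variable DAG in which $W_k$ is a group cause of $W_j$''), and the surrounding text frames the result as ``none of the arcs in the group DAG imply strong group causality.'' Your single two-group instance, even padded by isolated groups, treats only the very special $H$ consisting of one arc plus disconnected vertices; it does not address an arbitrary $H$ in which $W_j$ and $W_k$ sit inside a richer web of other groups and arcs whose groupwise CI pattern must also be matched.

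The paper handles the general case by picking two distinguished nodes $w_i^1,w_i^2$ from each $W_i$, replicating $H$ on the $w_i^1$ nodes (so the groupwise CI pattern of $H$ is reproduced exactly), and then laying the \emph{reverse} of one directed $W_j$-to-$W_k$ path of $H$ on the corresponding $w_i^2$ nodes. This reversed path supplies the backward causal route from $W_k$ to $W_j$ without introducing any groupwise dependency not already present in $H$, since every edge on it connects groups that are already adjacent in $H$. Your example is precisely this construction in the degenerate case of a single arc; what is missing is the extension to arbitrary $H$.
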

\begin{proof}
By the definition of strong group cause, if $W_j$ is a strong group cause of $W_k$ then $W_k$ is not a group cause of $W_j$. Thus, to prove the theorem, it is sufficient to show that for any group DAG $H$  and a grouping $W$ with $|W_i| >1$ for all $i$ there exists a causal variable DAG in which $W_k$ is a group cause of $W_j$.
In other words, it is sufficient to show that for any group DAG $H$ on $W$, where $W_j$ is an ancestor of $W_k$,  it is possible to construct a causal variable DAG $G$ on $N$ such that $G$ given $W$ implies the same conditional independencies as $H$, and there exists a pair $w_k\in W_k$ and $w_j\in W_j$ such that there is a directed path from $w_k$ to $w_j$ in $G$. 

Next, we will show how to construct such a variable DAG. 
Let $H$ be the group DAG on $W$ expressing groupwise conditional independencies. Without loss of generality, we can choose two distinct nodes $w_i^1$ and $w_i^2$ from each group $W_i$. Now consider the following causal variable DAG $G'$ on $N$. We start by setting $G'$ to be an empty DAG. Then, we add edges from $w_i^1$ to $w_l^1$ for all $i$ and $l$ such that there is an edge from $W_i$ to $W_l$ in $H$. Finally, we select a directed path $R$ from $W_j$ to $W_k$ and add an edge from $w_l^2$ to $w_i^2$ to $G'$ if there is an edge from $W_i$ to $W_l$ on $R$; note that $W_j$ is an ancestor of $W_k$ so there exists at least one directed path from $W_j$ to $W_k$.  

It remains to show that the above construction has the desired properties, that is, $G'$ given $W$ implies the same conditional independencies as $H$, and there exists a pair $w_k\in W_k$ and $w_j\in W_j$ such that there is a directed path from $w_k$ to $w_j$ in $G'$.
It is clear that the induced graph on $w_i^1$-variables imply exactly the same groupwise conditional independencies as $H$. Furthermore, there is a path from $W_j$ to $W_k$ in $H$ and the $w_i^2$-variables encode the same path in reverse, and do not express any dependencies that are not already implied by the $w_i^1$-variables; in other words, if $w_i^2$ and $w_l^2$ are d-connected given $S\subseteq W\setminus \{W_i, W_l\}$ in $G'$ then $W_i$ and $W_l$ are d-connected given $S$ in $H$. Therefore, $H$ implies exactly the same conditional independencies on $W$ as $G'$ given $W$. Furthermore, due to the existence of a path from $w_k^2 \in W_k$ to $w_j^2\in W_j$ in the causal variable DAG $G'$, $W_j$ is not a strong group cause of $W_k$. This is sufficient to show that one cannot infer strong group causality using only groupwise conditional independencies.
\end{proof}

\section{Learning group DAGs}

Next, we will introduce three approaches for learning group DAGs.

\subsection{Direct Learning} \label{sec:direct}

The most straightforward approach is to learn a group DAG directly,
that is, either using conditional independencies or local scores on a
grouping $W$. In other words, we can consider each group as a variable. Assuming that the variables are discrete, the possible states of the new variable $w_i$, corresponding to the group $W_i$, are the Cartesian product of the states of the variables in $W_i$. Now there is a bijective mapping between joint configurations of variables in $W_i$ and states of $w_i$. Thus $W_i \ci W_j | S_1$ if and only if $w_i \ci w_j | S_2$ where $W_l\subseteq S_1$ if and only if $w_l\in S_2$. 
This leads to a simple procedure described in Algorithm~\ref{alg:dl}.  
\begin{algorithm} [!hbt]
\caption{\sc FindGroupDAG1} \label{alg:dl}
\begin{algorithmic}[1]
\REQUIRE Data $D$ on a node set $N$, a grouping $W$ on $N$.
 \ENSURE Group DAG $G$
 \STATE Convert variables $x_i \in N$ into new variables $y_j$ on $W$ such that $y_j = \times_{x_i \in W_j} x_i$ .
 \STATE Learn a DAG $G$ on the new variables on $W$ using procedure \vdag.
 \RETURN $G$
 \end{algorithmic}
\end{algorithm}

The procedure \vdag in the second step is an algorithm for finding a DAG; it can use either the constraint-based or score-based approach. In principle, \vdag can be any learning algorithm. However, if \vdag is an exact algorithm then we can prove some theoretical guarantees; see Theorems~\ref{thm:constraint-based-group} and \ref{thm:score-based-group} below. We will next prove the correctness of the algorithm for the constraint-based approach. First, we state a well-known lemma that is used in the proof.

\begin{lemma}[\cite{spirtes00}] \label{thm:pc}
Given data $D$ on variables $V$, if $V$ is causally sufficient, the data-generating distribution is faithful to a DAG $A$, and the sample size tends to infinity then the PC algorithm finds a DAG that is Markov equivalent to $A$.
\end{lemma}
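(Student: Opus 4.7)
The plan is to verify the two phases of the PC algorithm separately: the skeleton phase and the orientation phase. Throughout, I would use the fact that as the sample size tends to infinity, any consistent statistical test of conditional independence returns the correct answer with probability one, so under faithfulness of $p$ to $A$ the oracle $X \ci Y \mid S$ is equivalent to d-separation of $X$ and $Y$ by $S$ in $A$. Causal sufficiency ensures there are no hidden common causes so that the faithfulness assumption on the observed variables is meaningful.

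First, I would argue that the PC skeleton equals the skeleton of $A$. By faithfulness, two nodes $X$ and $Y$ are adjacent in $A$ if and only if no set $S$ makes $X$ and $Y$ d-separated; hence the PC algorithm will never remove the edge between two true neighbors, no matter what conditioning sets it tries. For non-adjacent nodes $X$ and $Y$, I would use the standard fact that they are d-separated by the parents of $X$ in $A$ (equivalently by the parents of $Y$). At every stage of the PC skeleton loop, the current neighbors of $X$ contain the true parents of $X$ (because true adjacencies are never removed), so when the loop reaches the appropriate conditioning-set size the algorithm will test a separating set and delete the edge. This establishes skeleton correctness and, as a by-product, yields for every missing edge a concrete separating set $\mathrm{Sepset}(X,Y)$.

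Second, I would show the orientation phase is correct. For any unshielded triple $X - Z - Y$ in the recovered skeleton, faithfulness implies $Z$ is a collider in $A$ if and only if $Z$ does not d-separate $X$ and $Y$, i.e.\ $Z \notin \mathrm{Sepset}(X,Y)$; this is exactly the PC v-structure rule, so all and only the true v-structures get oriented. By the Verma--Pearl theorem cited in the preliminaries, two DAGs are Markov equivalent iff they share the same skeleton and the same v-structures, so at this point the partially directed graph matches the CPDAG of $A$ on its oriented edges. Finally, iterating the Meek rules is known to orient every additional edge whose direction is forced by the equivalence class (and no others), so the output CPDAG represents exactly the Markov equivalence class of $A$; picking any consistent extension yields a DAG Markov equivalent to $A$.

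The main obstacle is the subtlety in the skeleton phase: one must argue that conditioning only on subsets of the \emph{current} neighbor set is sufficient, rather than on all subsets of $V\setminus\{X,Y\}$. This is handled by the invariant that the current neighbor set always contains the true parents throughout the iteration, combined with the d-separation fact that parents form a separating set for any non-adjacent pair. The remaining steps—v-structure identification and Meek-rule completeness—are essentially bookkeeping once skeleton correctness and oracle accuracy are in hand.
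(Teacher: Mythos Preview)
The paper does not prove this lemma at all: it is stated as a citation to Spirtes et al.\ and used as a black box in the subsequent theorems. So there is no ``paper's own proof'' to compare against; your sketch is a self-contained outline of the standard PC correctness argument from the cited reference, which is more than the paper itself provides.

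As for the sketch on its own merits, it is essentially sound, with one small imprecision worth flagging. In the skeleton phase you write that non-adjacent $X$ and $Y$ are d-separated by the parents of $X$ ``(equivalently by the parents of $Y$)''. These are not equivalent in general: the correct statement is that they are d-separated by \emph{at least one} of $\mathrm{Pa}(X)$ or $\mathrm{Pa}(Y)$ (specifically, the parents of whichever node is not an ancestor of the other). This matters because the PC algorithm searches subsets of the current neighbors of $X$ \emph{and} subsets of the current neighbors of $Y$; your invariant needs to guarantee that the relevant parent set is contained in the corresponding current-neighbor set, not that either one works. With that correction the skeleton argument goes through. The v-structure step is fine: under faithfulness a non-collider $Z$ on an unshielded triple must lie in \emph{every} separating set (else the path $X\!-\!Z\!-\!Y$ is open), and a collider lies in \emph{none}, so testing membership in the particular $\mathrm{Sepset}(X,Y)$ recorded by PC is indeed decisive.
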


\begin{theorem} \label{thm:constraint-based-group}
Let data $D$ be generated from a Bayesian network $(G, \Theta)$ which is groupwise faithful to a DAG $G'$ given a grouping $W$. If causal sufficiency holds, the sample size tends to infinity, and the procedure \vdag uses the PC algorithm then Algorithm~\ref{alg:dl} returns a structure $H$ that is Markov equivalent  to $G'$.
\end{theorem}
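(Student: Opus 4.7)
The plan is to reduce the claim to a direct application of Lemma~\ref{thm:pc} after checking that the transformed problem meets its hypotheses.

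First I would verify the conditional independence correspondence between the original groups and the new Cartesian-product variables. As already observed in the preamble to the algorithm, the mapping $W_i \mapsto y_i = \times_{x\in W_i} x$ is a bijection between joint configurations of $W_i$ and states of $y_i$, and an analogous bijection holds for any union of groups. Hence for any $W_i, W_j$ and any $S = \cup_l T_l$ with $T\subseteq W\setminus\{W_i,W_j\}$, one has $W_i \ci W_j \mid S$ in $p$ if and only if $y_i \ci y_j \mid \{y_l : T_l\in T\}$ in the induced distribution $\tilde{p}$ on the $y$-variables. In particular, the set of conditional independencies that $\tilde{p}$ satisfies over $\{y_1,\ldots,y_k\}$ is exactly the set of groupwise conditional independencies that $p$ satisfies over $W$.

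Next I would use the groupwise faithfulness assumption. By hypothesis, $p$ is groupwise faithful to $G'$ given $W$, meaning that the groupwise conditional independencies of $p$ coincide exactly with those implied by $G'$. Combined with the correspondence above, this shows that $\tilde{p}$ is faithful to $G'$, now viewed as a DAG on the nodes $\{y_1,\ldots,y_k\}$. Causal sufficiency for the $y$-variables follows from causal sufficiency of $N$: any common cause of $y_i$ and $y_j$ would be a common cause of some $x \in W_i$ and some $x' \in W_j$, which by assumption is itself observed and therefore lies in some $W_l$, hence in $y_l$.

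Finally, since \vdag runs the PC algorithm and the sample size tends to infinity, Lemma~\ref{thm:pc} applied to $\tilde{p}$ and $G'$ (on the $y$-variables) yields that the returned DAG $H$ is Markov equivalent to $G'$. The main thing to be careful about is the conditioning-set restriction: the oracle $\mathcal{O}_G$ only answers queries where the conditioning set is a union of whole groups, but this matches precisely the conditional independence queries made by PC on the $y$-variables, since each $y_l$ is either included or excluded as a whole. Thus no information beyond what is available at the group level is needed, and the correctness of PC transfers directly.
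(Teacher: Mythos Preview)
Your proposal is correct and follows essentially the same route as the paper: establish that the Cartesian-product encoding preserves conditional independencies between groups, conclude via groupwise faithfulness that the induced distribution $\tilde p$ is faithful to $G'$, and then apply Lemma~\ref{thm:pc}. Your version is slightly more explicit than the paper's (you spell out the causal-sufficiency transfer and note that PC's conditioning sets on the $y$-variables correspond exactly to unions of whole groups), but the argument is the same.
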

\begin{proof}
Let an assignment of values of variables in $W_j$ be denoted by $W_j = w$ and assignment of the state of $y_j$ be denoted by $y_j = y$.
By the definition of $y_j$, each value $y$ of $y_j$ corresponds to exactly one assignment $w$. Thus, for every $y$ there exists a $w$ such that $P(y_j = y | S) = P(W_j = w| S)$ for all $S\in 2^{W\setminus W_j}$.  Therefore, $y_i \ci y_j | S$ if and only if $W_i \ci W_j | S$.

Causal sufficiency and groupwise faithfulness guarantee that the data-generating distribution has a perfect map $G'$ on $W$. Thus, by causal sufficiency, groupwise faithfulness, infinite sample size, and Lemma~\ref{thm:pc}, Algorithm~\ref{alg:dl} returns a DAG $G$ that is equivalent to $G'$. 
\end{proof}

The same result can easily be extended to the score-based approach;
see Theorem~\ref{thm:score-based-group} below. We assume that the scoring criterion is consistent. To this end, we say that a distribution $p$ is {\em contained} in a DAG $G$  if there exist parameters $\Theta$ such as $(G, \Theta)$ represents $p$ exactly. We are given i.i.d. samples $D$ from some distribution $p$. A scoring criterion $S$ is said to be {\em consistent} if, when the sample size tends to infinity, (1) $S(G, D) > S(H, D)$ for all $G$ and $H$ such that $p$ is contained in $G$ but not in $H$ and (2) $S(G, D) > S(H, D)$ if $p$ is contained in both $G$ and $H$ and $G$ has less parameters that $H$; for a more formal treatment of consistency, see, e.g., \citep{slobodianik09}.
The proof is analogous
to the proof above; instead of Lemma~\ref{thm:pc} one simply uses
the fact (Proposition~8 in \citep{chickering02b}) that if $V$ is causally sufficient, the data-generating distribution is faithful to a DAG,  a consistent scoring criterion is used and the sample size tends to infinity, then exact score-based algorithms return a DAG that is equivalent to the data-generating DAG .

\begin{theorem}
Let data $D$ be generated from a Bayesian network $(G, \Theta)$ which
is groupwise faithful to a DAG $G'$ given the grouping $W$. If causal sufficiency holds, the sample size tends to infinity,  and
 the procedure \vdag uses an exact score-based algorithm with a
consistent scoring criterion then
Algorithm~\ref{alg:dl} returns a structure $H$ that is Markov
equivalent to $G'$.
\label{thm:score-based-group}
\end{theorem}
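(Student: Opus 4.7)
The plan is to mirror the structure of the proof of Theorem~\ref{thm:constraint-based-group}, swapping in the score-based consistency guarantee for the PC-algorithm consistency used there. The only conceptual ingredients we need are (i) a bijection between joint configurations of each group $W_j$ and the states of the aggregated variable $y_j$, (ii) the fact that this bijection preserves conditional independencies, and (iii) a consistency statement for exact score-based learning on the aggregated variables.

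First I would re-invoke the bijection argument from the proof of Theorem~\ref{thm:constraint-based-group}: by construction of the aggregated variables, for every assignment $y$ of $y_j$ there is a unique joint configuration $w$ of $W_j$ with $P(y_j=y\mid S) = P(W_j = w\mid S)$ for every $S \in 2^{W\setminus W_j}$. Hence $y_i \ci y_j \mid S$ if and only if $W_i \ci W_j \mid S$, so the induced distribution $p_Y$ on $\{y_1,\ldots,y_k\}$ carries exactly the groupwise conditional independencies of the original distribution.

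Next, by the groupwise faithfulness of the data-generating distribution to $G'$, the set of conditional independencies implied by $G'$ on $W$ coincides with the groupwise conditional independencies in $p$. Combined with the previous step, this means $p_Y$ is faithful to $G'$ viewed as a DAG on $\{y_1,\ldots, y_k\}$, and in particular $p_Y$ is contained in $G'$. Since $\{y_1,\ldots, y_k\}$ is just a re-parametrization of $N$, causal sufficiency for $N$ transfers to causal sufficiency for $\{y_1,\ldots,y_k\}$.

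Now I would apply the consistency result for exact score-based learning (Proposition~8 of \citep{chickering02b}): under causal sufficiency, faithfulness of the data-generating distribution to some DAG, a consistent scoring criterion, and infinite sample size, an exact score-based algorithm returns a DAG equivalent to the data-generating DAG. Applied to the aggregated data with the faithful DAG $G'$, this yields an output $H$ Markov equivalent to $G'$, which is exactly what Algorithm~\ref{alg:dl} returns. I do not expect any genuine obstacle here; the only subtlety worth flagging is checking that the consistency premise requires faithfulness of the \emph{aggregated} distribution to \emph{some} DAG, and that is precisely what groupwise faithfulness of $p$ to $G'$ buys via the bijection.
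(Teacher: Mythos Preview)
Your proposal is correct and follows essentially the same approach as the paper: the paper explicitly states that the proof is analogous to that of Theorem~\ref{thm:constraint-based-group}, with Lemma~\ref{thm:pc} replaced by the score-based consistency result (Proposition~8 in \cite{chickering02b}), which is precisely what you do. Your write-up is in fact more detailed than the paper's, which leaves the argument at the level of that one-sentence sketch.
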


\subsection{Learning via Variable DAGs} \label{sec:var_net}

We note that a DAG over individual variables specifies also all the conditional independencies and dependencies between groups. Thus, it is possible to learn a group DAG by first learning a variable DAG and then inferring the group DAG. Algorithm~\ref{alg:vn} summarizes this approach.

\begin{algorithm}[!hbt] 
\caption{{\sc FindGroupDAG2}} \label{alg:vn}
\begin{algorithmic}[1]
\REQUIRE Data $D$ on a node set $N$, a grouping $W$ on $N$.
\ENSURE  Group DAG $G$
\STATE Learn a DAG $H$ on $N$ using procedure \vdag.
\STATE Learn a group DAG $G$ on $W$ using the PC algorithm and d-separation in $H$ as an independence test.
\RETURN $G$
\end{algorithmic}
\end{algorithm}

The procedure \vdag can again be either constraint-based or score-based. The following theorem shows the theoretical guarantees of the algorithm assuming that \vdag is exact.

\begin{theorem}
Let data $D$ be generated from a Bayesian network $(G, \Theta)$ which
is groupwise faithful to a DAG $G'$ given the grouping $W$. If causal sufficiency and faithfulness hold, the sample size tends to infinity,  and the procedure \vdag uses the PC
algorithm,  Algorithm~\ref{alg:vn} returns a structure $H$ that is
Markov equivalent to $G'$.
\end{theorem}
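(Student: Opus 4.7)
The proof will proceed by composing the two correctness guarantees of the PC algorithm already available to us: one for the variable-level step and one for the group-level step.

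First, I would establish that the DAG $H$ produced by the inner \vdag call in Step~1 is Markov equivalent to $G$. Since the data-generating distribution $p$ is, by hypothesis, faithful to $G$, and causal sufficiency holds with the sample size tending to infinity, Lemma~\ref{thm:pc} applied to the PC algorithm on the variable set $N$ yields exactly this: $H$ and $G$ share the same skeleton and v-structures, and hence imply the same set of conditional independencies over $N$.

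Next, I would translate this variable-level equivalence into a statement about groupwise conditional independencies. Because $H$ is Markov equivalent to $G$, for any $W_i,W_j\in W$ and any $S=\cup_l T_l$ with $T\subseteq W\setminus\{W_i,W_j\}$, all pairs $(v,u)\in W_i\times W_j$ that are d-separated by $S$ in $G$ are also d-separated by $S$ in $H$, and vice versa. Using the extension of faithfulness to variable sets recorded at the end of Section~2.2, this means that performing the independence test ``are $W_i$ and $W_j$ d-separated by $S$ in $H$?'' in Step~2 returns exactly the answer to ``does $W_i\ci W_j\mid S$ hold in $p$?''. In other words, d-separation queries in $H$ faithfully realize the groupwise independence oracle $\mathcal{O}_G$ for the distribution $p$.

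The final step is to invoke Lemma~\ref{thm:pc} again, this time in the group setting, exactly as in the proof of Theorem~\ref{thm:constraint-based-group}. By groupwise faithfulness of $p$ to $G'$ given $W$, the group DAG $G'$ is a perfect map of $p$ over $W$, so the PC algorithm run on $W$ with an independence oracle equivalent to $\mathcal{O}_G$ returns a DAG Markov equivalent to $G'$. Combining the two steps shows that Algorithm~\ref{alg:vn} outputs such a structure.

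The only genuinely delicate point is the second step: one must check that the variable-level Markov equivalence $H\equiv G$ really does transfer to equivalence of \emph{groupwise} d-separation statements, including the ability to condition on arbitrary unions of groups. This is immediate from the fact that d-separation between two sets is defined in terms of d-separation between all pairs of their elements, which is invariant under Markov equivalence; still, this is the place where the argument must be stated explicitly, since the group DAG step is only as reliable as the independence oracle handed to it.
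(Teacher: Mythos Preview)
Your proposal is correct and follows essentially the same two-step approach as the paper: first use Lemma~\ref{thm:pc} with faithfulness and causal sufficiency to conclude that the variable DAG $H$ learned in Step~1 is Markov equivalent to $G$, then use groupwise faithfulness and Lemma~\ref{thm:pc} again to conclude that the PC algorithm on groups returns a DAG Markov equivalent to $G'$. Your write-up is more explicit than the paper's terse version---in particular, you spell out why Markov equivalence of $H$ and $G$ guarantees that d-separation in $H$ realizes the correct groupwise independence oracle---but the underlying argument is the same.
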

\begin{proof}
As causal sufficiency and faithfulness hold, there exists a variable DAG that is a
perfect map of the data-generating distribution, and because of infinite sample size and
Lemma~\ref{thm:pc}, the DAG $H$ is that perfect map. By
groupwise faithfulness, the conditional independencies implied by
$H$ given the grouping $W$, can be expressed exactly by a group DAG. Thus
by Lemma~\ref{thm:pc}, Algorithm~\ref{alg:vn} returns a DAG $G$ that
is Markov equivalent to $G'$.
\end{proof}

Again, the above result holds also for score-based methods as summarized below.
\begin{theorem}
Let data $D$ be generated from a Bayesian network $(G, \Theta)$ which
is groupwise faithful to a DAG $G'$ given grouping $W$. 
If causal sufficiency and faithfulness hold, the sample size tends to infinity, and  the procedure \vdag uses an exact
score-based algorithm with a consistent scoring criterion, then Algorithm~\ref{alg:vn}
returns a structure $H$ that is Markov equivalent to $G'$.
\end{theorem}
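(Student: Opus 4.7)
The plan is to mirror the proof given for the constraint-based counterpart (the theorem for Algorithm~\ref{alg:vn} with the PC algorithm), replacing the appeal to Lemma~\ref{thm:pc} in the first step with the corresponding consistency guarantee for exact score-based search. In other words, I would establish that in step~1 the learned variable DAG $H$ is a perfect map of the data-generating distribution, and then invoke the same argument as before to conclude that step~2 returns a group DAG Markov equivalent to $G'$.

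More concretely, I would proceed in two steps. First, by causal sufficiency and the faithfulness assumption the data-generating distribution $p$ has a perfect map on the variable set $N$; call one representative DAG of its Markov equivalence class $G_0$. Since we assume an exact score-based procedure with a consistent scoring criterion, I would invoke Proposition~8 of \citep{chickering02b} (already cited in the discussion right before Theorem~\ref{thm:score-based-group}) to conclude that, as the sample size tends to infinity, \vdag returns a DAG $H$ that is Markov equivalent to $G_0$, and hence $H$ is itself a perfect map of $p$ on $N$.

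Second, I would use groupwise faithfulness to argue that the groupwise conditional independencies implied by $p$ over $W$ coincide exactly with those implied by the target group DAG $G'$. Because $H$ is a perfect map of $p$, d-separation in $H$ is a sound and complete independence oracle for $p$, and in particular it answers every query of the form $W_i \ci W_j \mid S$ (with $S$ a union of other groups) exactly according to $p$. Therefore step~2 supplies the PC algorithm with the true groupwise conditional independencies of $p$, which by groupwise faithfulness admit a perfect map $G'$; by Lemma~\ref{thm:pc} the PC algorithm then returns a group DAG Markov equivalent to $G'$.

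I do not expect any significant obstacle: the argument is essentially a direct composition of two already-proved facts, with the consistency of the score-based criterion playing the role that the soundness/completeness of PC played in the preceding theorem. The only subtlety worth stating carefully is that ``consistency'' here is used in the standard sense of \citep{chickering02b, slobodianik09}, so that in the large-sample limit the maximiser of the score lies in the Markov equivalence class of any perfect map of $p$; this is what makes step~1 yield a perfect map rather than merely an I-map of $p$, which is what the second step requires.
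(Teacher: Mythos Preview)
Your proposal is correct and follows exactly the approach the paper intends: the paper does not spell out a separate proof for this theorem, noting only that ``the above result holds also for score-based methods,'' which amounts precisely to rerunning the argument of the preceding (constraint-based) theorem with Proposition~8 of \citep{chickering02b} in place of Lemma~\ref{thm:pc} for step~1. Your two-step decomposition and the care you take in noting that consistency yields a perfect map (not merely an I-map) are entirely in line with the paper's reasoning.
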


\subsection{Combined learning}  \label{sec:simultaneous}

The combined learning algorithm is based on the score-based approach and learns both the variable DAG and the group DAG simultaneously under an assumption that the topological orders of the variable DAG and the group DAG are compatible. This algorithm is a variant of the dynamic programming algorithm by Silander and Myllym\"aki \cite{silander06}. The pseudocode is shown in Algorithm~\ref{alg:causal}. For simplicity, we show only how to compute the score of the group DAG; the DAG can be constructed in the similar fashion as in Silander and Myllym\"aki \cite{silander06}, by keeping track of which parent sets contributed to the score.

The algorithm begins with computing local scores for node--parent set pairs and finding the highest scoring parent set from the subsets of a given set (Lines~1--4). Then the algorithm proceeds to find the highest scoring DAG for each subset of the groups using dynamic programming (Lines~6--14). For each subset, one variable group is going to be a sink, that is, it has no children in the particular subset. Assuming that $W_i$ is the sink of the set $T$, the algorithm computes score for node $W_i$ given that the parents of $W_i$ are chosen from $T\setminus W_i$. This is computed by finding the score of the best DAG for nodes in $W_i$ given that each node is allowed to take parents from $T$ (Lines~8--11). The parent set of $W_i$ is then the union of all groups in $W_j\in T\setminus W_i$ such that at least one of the variables in $W_j$ is a parent of at least one variable of $W_i$ in the DAG found on Line~11. The score of the best group DAG on $T$ given that $W_i$ is a sink is the sum of the score of the sink and the score of the best DAG for the rest of the nodes. To find an optimal group DAG on $T$, one loops over all possible choices of sink and chooses the one with the highest score (Line~13). Finally, the optimal group DAG for the whole grouping is returned (Line~15).

\begin{algorithm}[!hbt] 
\caption{{\sc FindGroupDAG3}} \label{alg:causal}
\begin{algorithmic}[1]
\REQUIRE Data $D$ on a node set $N$, a grouping $W$ on $N$, the maximum number of parents $c$.
\ENSURE  A group DAG $H$
\FORALL {$v\in N$ and $S\subseteq N\setminus\{v\}, |S|\leq c$ in the order of increasing cardinality of $S$}
\STATE Store the local score for $v$ and $S$ to $s[v, S]$ 
\STATE $bs[v, S] = \max_{U\subseteq S} s[v, S]$
\ENDFOR
\STATE $B[\emptyset] = 0$
\FORALL {$T\in 2^W$ in the order of increasing cardinality}
\FORALL {$W_i \in T$}
\FORALL {$v\in W_i$ and $U\subseteq W_i\setminus\{v\}$}
\STATE $bss[v, U] = \max_{R\subseteq U\cup (T \setminus W_i)} bs[v, R] $
\ENDFOR
\STATE $ss[W_i, T\setminus W_i]$ =  the score of a highest scoring variable DAG on members of $W_i$ given local scores $bss[v, U]$
\ENDFOR
\STATE $B[T] = \max_{W_i\in T} \Big( ss[W_i, T\setminus W_i] + B[T\setminus W_i] \Big) $
\ENDFOR
\RETURN $B[N]$
\end{algorithmic}
\end{algorithm}

Let us analyze the time requirement of the algorithm. Recall that we have $n$ variables and $k$ groups. Let us use $n_{\max} = \max_{i} |W_i|$ to denote the size of the largest group. The first loop (Line~1) is executed $O(n^{c + 1})$ times. Finding the highest-scoring subset can be done using an additional $O(n)$ time \cite{silander06}. Thus, the first loop takes a total $O(n^{c + 2})$ time. Let us consider the loop starting at Line~6. The outmost loop is executed $2^k$ times and the second loop at most $n_{\max}$ times. The loop on Line~8 is executed at most $n_{\max} 2^{n_{\max} - 1}$ times. The computation of Line~9 can be done re-using values computed in previous steps by a straightforward adaptation of methods presented by Silander and Myllym\"aki  \cite{silander06}, with an additional cost of $O(n_{\max})$. The computation of Line~11 uses the standard Silander-Myllym\"aki algorithm and is done in $O(n_{\max} 2^{n_{\max}})$ time. This yields a total time requirement $O(n^{c + 2} + 2^{k + n_{\max}} n_{\max}^2 )$.

Note that finding a highest-scoring variable DAG using dynamic programming takes $O(n^2 2^n)$ time, so if the number of the groups and the sizes of the groups are approximately equal, the combined learning algorithm is considerably faster.

The following theorem provides theoretical guarantees for the algorithm.

\begin{theorem}
Let data $D$ be generated from a Bayesian network $(G, \Theta)$ which
is groupwise faithful to a DAG $G'$ given a grouping $W$ and whose topological order is compatible
with $G'$. If causal sufficiency and faithfulness hold, the sample size tends to infinity, and  the
procedure \vdag uses an exact score-based algorithm with a
consistent scoring criterion
then Algorithm~\ref{alg:causal} returns a structure $H$ that is Markov
equivalent to $G'$.
\end{theorem}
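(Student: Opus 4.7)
My plan is to argue that Algorithm~\ref{alg:causal} correctly solves a constrained score-based optimization whose search space contains $G$, and then to lift Markov equivalence from the variable level to the group level using groupwise faithfulness.

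First, I would characterize what $B[T]$ actually computes. Unrolling the recursion, $B[T]$ equals the maximum score over variable DAGs on $\bigcup_{W_i\in T} W_i$ whose group-induced order is a linear order on $T$; equivalently, over variable DAGs in which every variable's parents lie in its own group or in strictly earlier groups under some total order on $T$. I would prove this by induction on $|T|$: choosing $W_i$ as the sink of $T$ forces $W_i$ to have no outgoing group-edges to $T\setminus W_i$, and the embedded Silander--Myllym\"aki computation on Line~11 (fed with $bss[v,U]$, which is the best local score for $v$ using parents drawn from $U\cup (T\setminus W_i)$) correctly finds the highest-scoring variable DAG on $W_i$ under the rule that each variable may take parents inside $W_i$ together with any subset of $T\setminus W_i$. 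Summing with $B[T\setminus W_i]$ and maximizing over sink choices yields the claimed characterization.

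Second, I would locate $G$ inside this search space. The hypothesis that the topological order of $G$ is compatible with $G'$ provides an ordering of $N$ that simultaneously respects $G$ and the lifting of a topological ordering of $G'$: every arc of $G$ runs from a variable in some $W_j$ to a variable in $W_i$ where either $i=j$ or $W_j$ precedes $W_i$ in $G'$. This is exactly the admissibility condition of the DP's search space, so $G$ (together with every Markov-equivalent DAG sharing a compatible order) is a candidate considered by the algorithm. Then, by consistency of the scoring criterion and infinite sample size (Proposition~8 of \cite{chickering02b}), every DAG that both contains the data-generating distribution and has the minimum number of parameters scores strictly higher than any other DAG; under faithfulness these are exactly the DAGs Markov equivalent to $G$. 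Since such a DAG lies in the search space, the argmax tracked through the DP (as noted in the algorithm's description) yields a variable DAG Markov equivalent to $G$.

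Third, I would construct the returned group DAG $H$ by placing $W_i\to W_j$ whenever some variable in $W_i$ is a parent, in the recovered variable DAG, of some variable in $W_j$. Groupwise faithfulness asserts that the conditional independencies on $W$ implied by $G$ are exactly those implied by $G'$; since Markov-equivalent variable DAGs imply the same variable-level independencies, they also imply the same group-level independencies. Hence $H$ and $G'$ share the same groupwise independencies, so $H$ is Markov equivalent to $G'$.

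The main obstacle is the first step: verifying that the DP truly enumerates the intended constrained space. The subtle point is that the embedded Silander--Myllym\"aki call on Line~11 uses scores that already fold in optimally chosen extra-group parents, and one has to check that maximizing internally within $W_i$ and then summing with $B[T\setminus W_i]$ coincides with jointly maximizing the total score over all admissible variable DAGs on $\bigcup T$ having $W_i$ as group sink. Once this decomposition is in hand, the consistency argument and the groupwise-faithfulness lift go through essentially as in the proofs of the preceding two theorems.
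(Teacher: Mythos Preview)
Your proposal is correct and follows the same approach as the paper; in fact it is considerably more detailed than the paper's own proof, which is a three-sentence sketch that simply observes that consistency makes $G$ a highest-scoring variable DAG, compatibility ensures the parents of each $W_i$ in $G$ lie in $W_i$ or in parents of $W_i$ in $G'$ (so $G$ is feasible for the DP), and concludes that the algorithm therefore attains the optimal score and returns $G'$. Your explicit characterization of the DP search space and the groupwise-faithfulness lift in step three are spelled out more carefully than in the paper itself.
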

\begin{proof}
Given causal sufficiency, faithfulness, infinite sample size, and a consistent scoring criterion, $G$ is a highest scoring variable network. Because $G$ and $G'$ are compatible, all parents of members of $W_i$ in $G$ are either in $W_i$ or in the members of parents of $W_i$ in $G'$. Therefore, the score of DAG $G'$ equals the highest score and the algorithm returns $G'$.
\end{proof}

Note that the algorithm is guaranteed to find the equivalence class of the data-generating structure only when the compatibility condition holds. Otherwise, the found variable network may be suboptimal even if the data-generating distribution is groupwise faithful.

\section{Experiments} \label{sec:experiments}

\subsection{Implementations}

We implemented our algorithms using Matlab. The implementation is available at  \url{http://research.cs. aalto.fi/pml/software/GroupBN/}. The implementation of the PC algorithm from the BNT toolbox\footnote{\url{https://code.google.com/p/bnt/}} was used as the constraint-based version of procedure \vdag. As the score-based version, we used the state-of-the-art integer linear programming algorithm GOBNILP\footnote{\url{http://www.cs.york.ac.uk/aig/sw/gobnilp/}}.

\subsection{Simulations} \label{sec:simulation}

Next, we will evaluate the prospects of learning group DAGs in practice. Our goal is to analyze 1) to what extent it is possible to learn group DAGs from data and 2) which learning approach one should use.

We did two different simulation setups.
In {\bf Experiment 1}, we generated data from three different manually-constructed Bayesian network structures
called structures 1, 2, and 3 having $30$, $40$, and $50$ nodes,
respectively, divided into $10$ equally sized groups. All structures
were groupwise faithful to the group DAG; the network structures are shown in \ref{app:figures}. For
each structure we generated 50 binary-valued Bayesian networks by
sampling the parameters uniformly at random. Then, we sampled data
sets of size 100, 500, 2000, and 10000 from each of the Bayesian
networks.

In {\bf Experiment 2}, we randomly generated groupwise faithful structures. We are not aware of any efficient algorithm for generating groupwise faithful DAGs. Also from the experiment in the Section~\ref{sec:gf} we know that selecting both DAGs and groupings at random tend to lead complete or near-complete group DAGs. Thus, to get sparser group DAGs and variable DAGs that are groupwise faithful to them,  we used to following procedure.
\begin{itemize}
\item Fix a node set $N$ of $nk$ nodes and a grouping $W$ on $N$ with $k$ nodes in each group.
\item Generate a group DAG $H$ with $n=10$ nodes with fixed order such that each possible edge is included independently with probability $p=0.2$.
\item Select one node $w_i\in W_i$ from each group. Initialize $G$ such that $w_iw_j\in G$ if and only if $W_iW_j\in H$.
\item Repeat 1000 times
\begin{itemize}
\item Choose nodes $u$ and $v$ uniformly at random from $N$.
\item If $uv\in G$ then $G' = G\setminus \{uv\}$ else $G' = G \cup \{uv\}$.
\item If $G'$ is acyclic and $G'$ given grouping $W$ implies the same conditional independencies as $H$ then $G=G'$.
\end{itemize}
\item Return $H$ and $G$.
\end{itemize}
We generated 100 group and variable DAGs using the above procedure for group sizes $k = 2, 3, 4, 5$. Then we generated a  binary-valued Bayesian network by
sampling the parameters uniformly at random and sampled data
sets of size 100, 500, 2000, and 10000 from each of the Bayesian
networks.

We ran both the constraint-based and score-based version of
Algorithms~\ref{alg:dl} and \ref{alg:vn}. Conditional independence tests were conducted using signifance level $0.05$ and the score-based algorithms used the BDeu score with equivalent sample size $1$.
In all tests we used a 4 GB memory
limit.  As we are interested in conditional independencies, we converted DAGs into CPDAGs and measured accuracy by computing structural Hamming distance (SHD) between the data-generating CPDAG and the learned CPDAG.

\begin{figure*}[t]
	\begin{flushright}
	\includegraphics[width=1.1\textwidth]{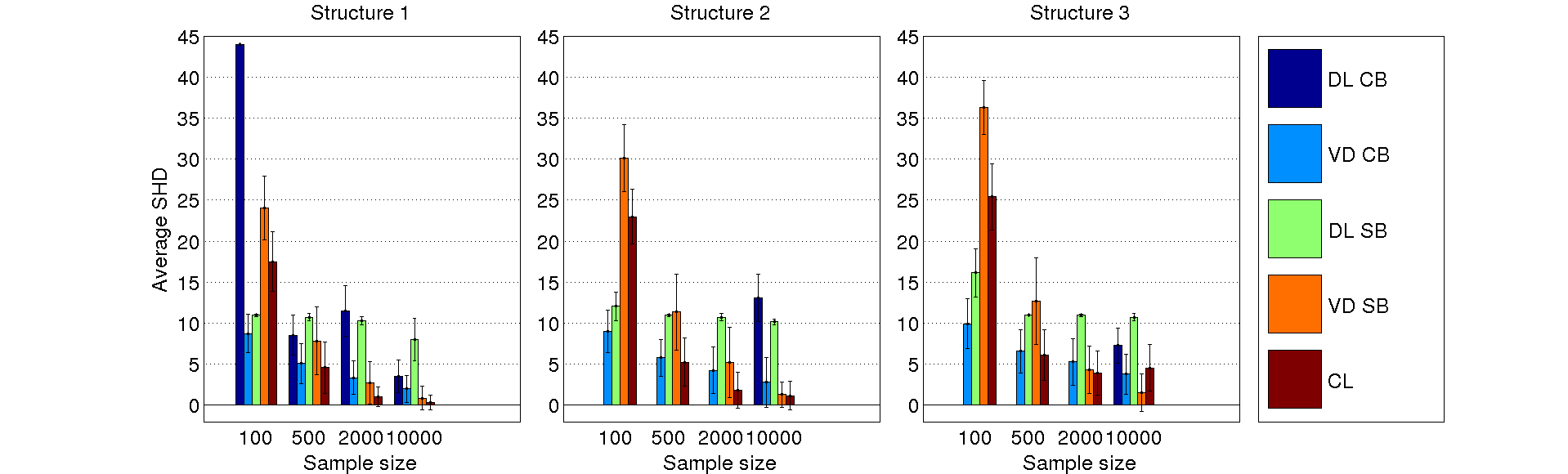}
	\end{flushright}
	\caption{Average SHD (Structural Hamming Distance) between the
          learned group CPDAG and the true group CPDAG when the data were
          generated from three different structures (Experiment 1). DL = direct learning, VD = learning using variable
          DAGs, CL = combined learning, CB = constraint-based, SB =
          score-based. The numbers on the x-axis are sample
          sizes. Missing bars for constraint-based direct learning
          are due to the algorithm running out of memory.}
	\label{fig:results2}
\end{figure*}

\begin{figure*}[t]
	\begin{flushright}
	\includegraphics[width=1.1\textwidth]{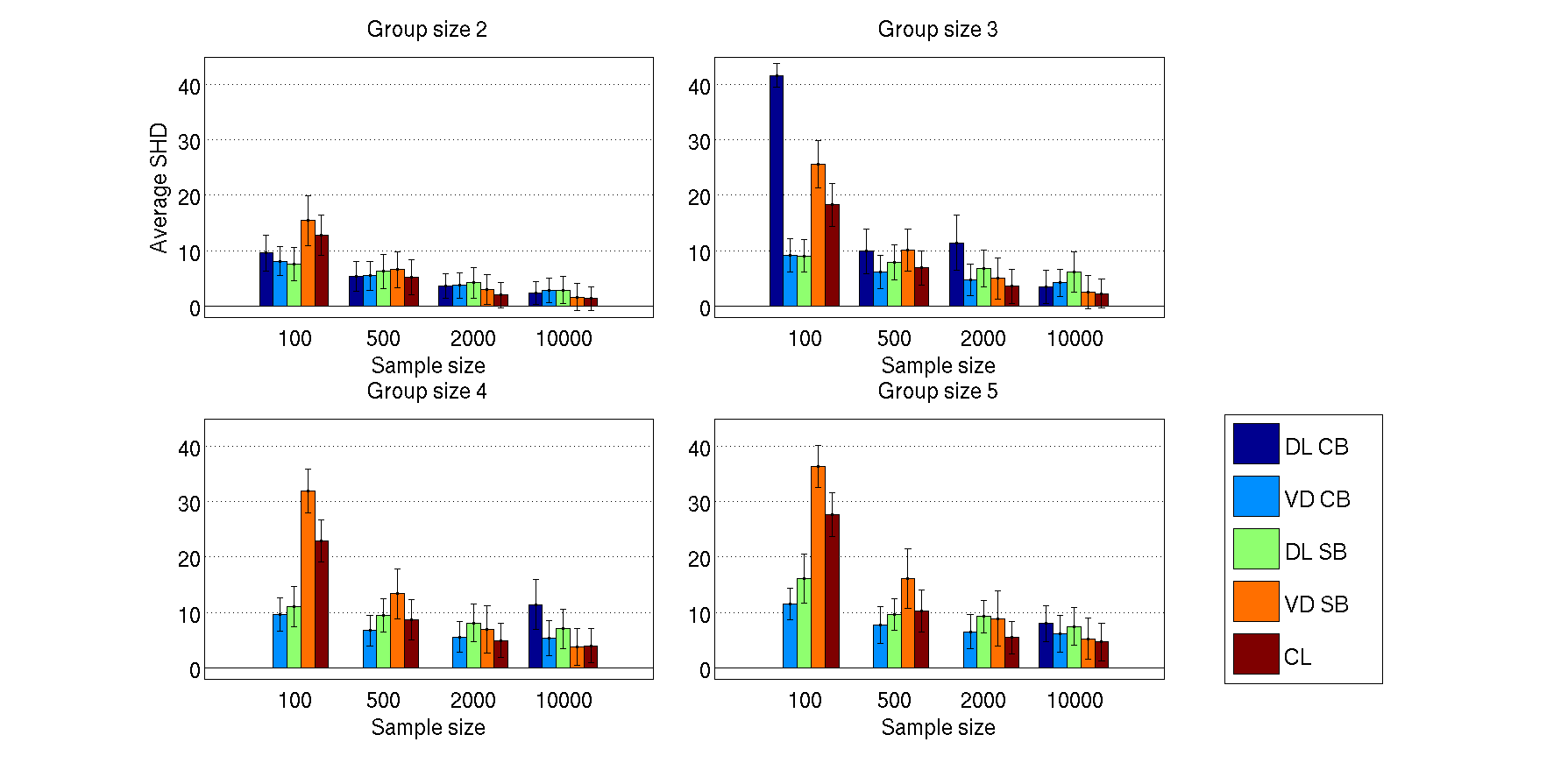}
	\end{flushright}
	\caption{Average SHD (Structural Hamming Distance) between the
          learned group CPDAG and the true group CPDAG when the data were
          generated by sampling groupwise faithful networks (Experiment 2). DL = direct learning, VD = learning using variable
          DAGs, CL = combined learning, CB = constraint-based, SB =
          score-based. The numbers on the x-axis are sample
          sizes. Missing bars for constraint-based direct learning
          are due to the algorithm running out of memory.}
	\label{fig:results_new}
\end{figure*}

The results from Experiments~1 and 2 are shown in Figures~\ref{fig:results2} and \ref{fig:results_new}, respectively. To answer our first research question, we notice that both experiments suggest that group DAGs can be learned accurately when the groups are small and there are sufficiently many samples; see, e.g., Figure~\ref{fig:results_new} with group size 2 and 10000 samples. However, the accuracy seems to decrease when the group size grows or the number of samples decreases. Intuitively, the decrease of accuracy when the groups size grows makes sense because the bigger the groups the more possibilities there are to add false positive edges to the group DAG. 

We also observe that constraint-based direct learning struggles often and in many cases we do not get any results because the algorithm runs out of memory. This is due to the fact that variables in the direct learning approach have lots of states and thus direct learning requires lots of data to draw any conclusions. On the other hand, it seems that the constraint-based lerning via variable DAGs performs well. Especially, it is generally the most accurate approach when there are few samples. The relatively good performance of the constraint-based approach when there is little data can be explained at least partially as follows. Intuitively, learning a true positive edge in the group DAG is robust: To include a true positive edge, it is enough that the learned variable DAG preserves only one d-connected path between the groups (out of possibly many such paths). On the other hand, even one false positive dependence between two nodes in different groups leads to connecting the two groups in the group DAG. Thus, too sparse variable DAGs seem to result in more accurate group DAGs than too dense variable DAGs.
This intuition is supported by our empirical observation that typically, learned group DAGs have more false positive edges than false negatives. Furthermore, we observe that constraint-based methods tend to be more conservative, that is, if there is little data then the variable DAG learned with the constraint-based method tends to be sparser than the variable DAG learned with the score-based method; the sparsity may be due to type II errors in conditional independence tests. 

Furthermore, we observe that the accuracy of score-based direct learning is not significantly affected by the sample size. Score-based learning via variables DAGs is very accurate when there are lots of samples. However, its accuracy decreases substantially if the number of samples is low.

Also combined learning gave accurate results, especially when the sample size was large, although all other methods have better theoretical guarantees than combined learning. Combined learning forces the topological orders of the variable and group DAG to be compatible and this might act as some kind of implicit regularization. Note that in Experiment~1 combined learning benefits from the fact that the topological orders of the data-generating variable and group DAGs were compatible but it was still quite accurate in Experiment~2 were the topological orders were not always compatible.

To answer our second question, we conclude that constraind-based learning via variable DAGs is the most accurate method if there are only few (less than 500) samples. If there are plenty samples then combined learning and score-based learning via variable DAGs are the most accurate approaches.

\subsection{Real data} \label{sec:real_data}

Next, we demonstrate learning of group DAGs from real data and challenges that are faced in this scenario. A prominent challenge here is the difficulty of assessing the quality of the learned group DAGs in the absence of ground-truth.

We applied the learning methods to the {\sc Housing} data that is
available at the UCI machine learning repository \cite{bache13}. The data
contain 14 variables for 506 observations, measuring multiple factors
affecting housing prices in different neighborhoods in the Boston
area. We grouped the variables into 9 groups. Group {\em
  Accessibility} consisted of variables CHAS, DIS, and RAD, group {\em
  Zoning} consisted of variables ZN and INDUS, group {\em Apartment
  properties} consisted of variables RM and AGE, and group {\em
  Population} consisted of variables B and LSTAT. Five of our groups
consisted of one variable: {\em Crime} of CRIM, {\em Pollution} of
NOX, {\em Education} of PTRATIO, {\em House prices} of MEDV, and {\em
  Taxes} of TAX.

We learned a group DAG using each of the five algorithms; all group
DAGs (as well as corresponding variable DAGs when applicable) are
shown in \ref{app:figures}. We show here
only two representative networks. Our simulations (Section~\ref{sec:simulation}) showed that
constraint-based learning via a variable DAG and combined learning resulted in smallest average SHD with sample
size 500 so we chose them as representative methods; the group DAG from constraint-based learning is shown in
Figure~\ref{fig:housing2}(a) and the corresponding variable DAG in
Figure~\ref{fig:housing2}(b). The DAGs from combined learning are shown in Figure~\ref{fig:housing5}.

\begin{figure}[hbt!]
\begin{center}
\hspace{-0.3in}
\begin{tabular}{cc}
\includegraphics[width=0.5\linewidth]{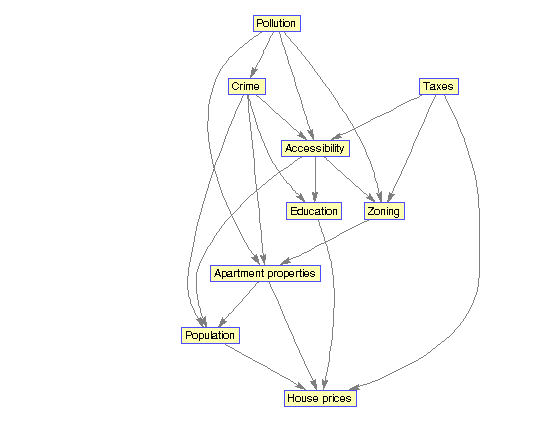} &
\includegraphics[width=0.5\linewidth]{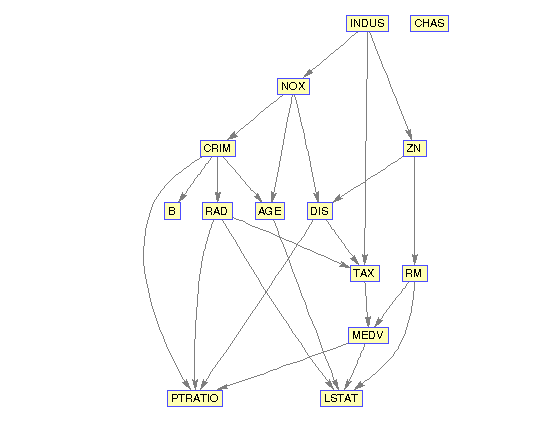} \\
(a) &
(b)\\
\end{tabular}
\caption{(a) The group DAG learned from {\sc Housing} data using constraint-based learning via a variable DAG. (b) The corresponding variable DAG.
\label{fig:housing2}
}
\end{center}
\end{figure}

\begin{figure}[hbt!]
\begin{center}
\hspace{-0.3in}
\begin{tabular}{cc}
\includegraphics[width=0.5\linewidth]{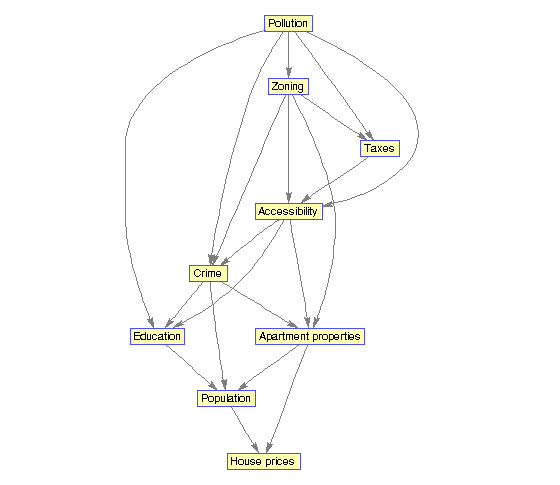} &
\includegraphics[width=0.5\linewidth]{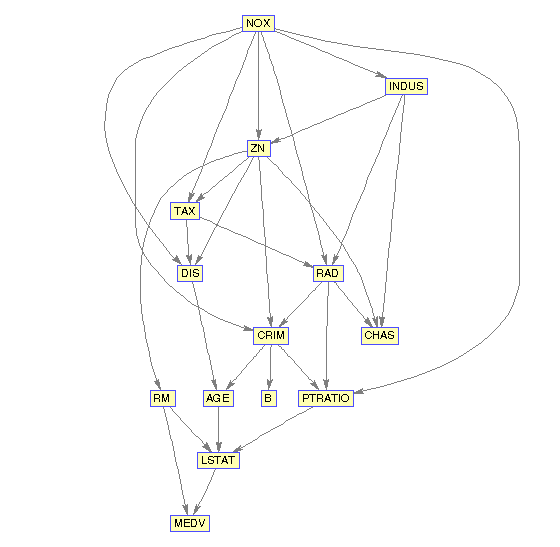} \\
(a) &
(b)\\
\end{tabular}
\caption{(a) The group DAG learned from {\sc Housing} data using combined learning. (b) The corresponding variable DAG.
\label{fig:housing5}
}
\end{center}
\end{figure}

We can make several observations from Figure~\ref{fig:housing2}. 
We notice that the group DAG in Figure~\ref{fig:housing2}(a) has a v-structure {\em Apartment properties} $\rightarrow$ {\em House prices} $\leftarrow$ {\em Taxes}. By Theorem~\ref{thm:weak_v}, this implies that {\em Apartment properties} and {\em Taxes} are group causes of  {\em House prices} and thus manipulating them would affect house prices; this seems  a plausible conclusion. We see from the variable DAG that, indeed, there are directed paths from both {\em Apartment properties} and {\em Taxes} to {\em House prices}. However, the variable DAG shown in Figure~\ref{fig:housing2}(b) is not groupwise faithful to the group DAG given the grouping. To see this, we notice that {\em Zoning} and {\em Crime} are conditionally independent in the variable DAG given {\em Pollution} and {\em Apartment properties} but not in the group DAG. Thus, the group DAG expresses some dependencies that are not present in the variable DAG.

We see that the DAGs in Figure~\ref{fig:housing5} differ from the ones in Figure~\ref{fig:housing2}. For example, {\em House prices} is a neighbor of {\em Apartment properties} but not with {\em Taxes} in the group DAG. Overall, structural Hamming distance between the group DAGs is $19$. While this case study is not enough to warrant any statistical conclusions, we recommend one not to trust blindly on the learned group DAGs because results may be sensitive to the choice of an algorithm.

\section{Discussion}

In this paper we introduced the concept of group DAG for modeling
conditional independencies and dependencies between groups of random
variables, and studied prospects of learning group DAGs. It turned out,
perhaps unsurprisingly, that many aspects become more complicated when
moving from individual variables to groups of variables. We 
showed that in order to have theoretical guarantees for the quality of
learned networks, one has to assume groupwise faithfulness, which is a
rather strong assumption. Further, inferring causal relationships
between groups becomes more tricky.

In this paper, we studied structure learning. Naturally, it is possible to extend group DAGs to group Bayesian networks by learning parameters. As each group can be treated as a variable, we can use any standard method for learning parameters.
However, it should be noted that the group variables tend to have lots of states which may render the estimation of parameters inaccurate. Therefore, if the goal is to use the learned network to infer probabilities then one may want to use a standard Bayesian network instead of a group Bayesian network. 

Our experiments suggest that data does not always behave ``nicely''. One inevitable difficulty is that data are often groupwise unfaithful. The other practical challenge is that principled methods add an edge to the group DAG if there exists even one weak dependency between two groups. Therefore, erroneous dependencies from conditional independence tests or local scores can lead into lots of false positive edges in the group DAG. In practice, it may be desirable to take a less principled approach and use some kind of regularization to get rid of spurious edges. One way to alleviate this problem is to use a low significance level in the conditional independence tests.

We have assumed that the variable groups are known beforehand, as prior knowledge, and asked what can be done with the extra prior knowledge. A natural follow-up question is that can the groups be learned from data. Even though this interesting question is superficially related it is, however, a distinct and very different problem that is likely to require a different machinery. Multiple different goals for such a clustering of variables are possible and sensible.

\section*{Acknowledgements}

The authors thank Cassio de Campos, Antti Hyttinen, Esa Junttila, Jefrey Lijffijt, Daniel Malinsky, Teemu Roos, and Milan Studen\'y for useful discussions. The work was partially funded by The Academy of Finland (Finnish Centre of Excellence in Computational Inference Research COIN). The experimental results were computed using computer resources within the Aalto University School of Science "Science-IT" project.

\FloatBarrier
\section*{References}
\bibliographystyle{plain}
\bibliography{ref}

\begin{thebibliography}{10}

\bibitem{aliferis10a}
C.F. Aliferis, A.~Statnikov, I.~Tsamardinos, S.~Mani, and X.D. Koutsoukos.
\newblock {Local Causal and Markov Blanket Induction for Causal Discovery and
  Feature Selection for Classification Part I: Algorithms and Empirical
  Evaluation}.
\newblock {\em Journal of Machine Learning Research}, 11:171--234, 2010.

\bibitem{bache13}
K.~Bache and M.~Lichman.
\newblock {UCI} machine learning repository, 2013.

\bibitem{burge06}
J.~Burge and T.~Lane.
\newblock {Improving Bayesian Network Structure Search with Random Variable
  Aggregation Hierarchies}.
\newblock In {\em ECML}, pages 66--77. Springer, Berlin, Heidelberg, 2006.

\bibitem{chickering96}
D.M. Chickering.
\newblock {Learning Bayesian networks is NP-Complete}.
\newblock In {\em Learning from Data: Artificial Intelligence and Statistics},
  pages 121--130. Springer-Verlag, 1996.

\bibitem{chickering02b}
D.M. Chickering.
\newblock {Optimal Structure Identification With Greedy Search}.
\newblock {\em Journal of Machine Learning Reseach}, 3:507--554, 2002.

\bibitem{cooper92}
G.F. Cooper and E.~Herskovits.
\newblock {A Bayesian Method for the Induction of Probabilistic Networks from
  Data}.
\newblock {\em Machine Learning}, 9(4):309--347, 1992.

\bibitem{cussens11}
J.~Cussens.
\newblock {Bayesian network learning with cutting planes}.
\newblock In {\em UAI}, pages 153--160. AUAI Press, 2011.

\bibitem{davis99}
S.~Davies and A.~Moore.
\newblock Bayesian network for lossless dataset compression.
\newblock In {\em Proceedings of the fifth ACM SIGKDD international conference
  on Knowledge discovery and data mining (KDD)}, pages 387--391, 1999.

\bibitem{entner12}
D.~Entner and P.O. Hoyer.
\newblock {Estimating a Causal Order among Groups of Variables in Linear
  Models}.
\newblock In {\em ICANN}, pages 83--90. Spinger, 2012.

\bibitem{erdos59}
P.~Erd\H{o}s and A.~R\'enyi.
\newblock On random graphs i.
\newblock {\em Publicationes Mathematicae}, 6:290--–297, 1959.

\bibitem{gilbert59}
E.N. Gilbert.
\newblock Random graphs.
\newblock {\em Annals of Mathematical Statistics}, 30:1141--–1144, 1959.

\bibitem{gyftodimos02}
E.~Gyftodimos and P.A. Flach.
\newblock {Hierarchical Bayesian networks: a probabilistic reasoning model for
  structured domains}.
\newblock In {\em ICML-2002 Workshop on Development of Representations}, 2002.

\bibitem{heckerman95}
D.~Heckerman, D.~Geiger, and D.M. Chickering.
\newblock {Learning Bayesian Networks: The Combination of Knowledge and
  Statistical Data}.
\newblock {\em Machine Learning}, 20(3):197--243, 1995.

\bibitem{jaakkola10}
T.~Jaakkola, D.~Sontag, A.~Globerson, and M.~Meila.
\newblock {Learning Bayesian Network Structure using LP Relaxations}.
\newblock In {\em AISTATS}, pages 358--365, 2010.

\bibitem{koller97}
D.~Koller and A.~Pfeffer.
\newblock {Object-oriented Bayesian networks}.
\newblock In {\em UAI}, pages 302--313. Morgan Kaufmann Publishers Inc., 1997.

\bibitem{meek95b}
C.~Meek.
\newblock {Causal Inference and Causal Explanation with Background Knowledge}.
\newblock In {\em UAI}, pages 403--410. Morgan Kaufmann, 1995.

\bibitem{parviainen16}
P.~Parviainen and S.~Kaski.
\newblock Bayesian networks for variable groups.
\newblock In {\em JMLR: Workshop and Conference Proccedings}, volume~52, pages
  380--391, 2016.

\bibitem{pearl00}
J.~Pearl.
\newblock {\em {Causality: Models, Reasoning, and Inference}}.
\newblock Cambridge university Press, 2000.

\bibitem{segal05}
E.~Segal, D.~Pe'er, A.~Regev, D.~Koller, and N.~Friedman.
\newblock {Learning Module Networks}.
\newblock {\em Journal of Machine Learning Reseach}, 6:557--588, October 2005.

\bibitem{silander06}
T.~Silander and P.~Myllym{\"a}ki.
\newblock {A simple approach for finding the globally optimal Bayesian network
  structure}.
\newblock In {\em UAI}, pages 445--452. AUAI Press, 2006.

\bibitem{slobodianik09}
N.~Slobodianik, D.~Zaporozhets, and N.~Madras.
\newblock Strong limit theorems for the bayesian scoring criterion in bayesian
  networks.
\newblock {\em Journal of Machine Learning Research}, 10:1511--1526, 2009.

\bibitem{spirtes00}
P.~Spirtes, C.~Glymour, and R.~Scheines.
\newblock {\em {Causation, Prediction, and Search}}.
\newblock Springer Verlag, 2000.

\bibitem{verma90}
T.S. Verma and J.~Pearl.
\newblock Equivalence and synthesis of causal models.
\newblock In {\em UAI}, pages 255--270. Elsevier, 1990.

\bibitem{xiang93}
Y.~Xiang, D.~Poole, and M.~P. Beddoes.
\newblock Multiply sectioned bayesian networks and junction forests for large
  knowledge-based systems.
\newblock {\em Computational Intelligence}, 9:171--220, 1993.

\end{thebibliography}

\pagebreak
\appendix
\section{Proofs of Theorems~\ref{thm:imply} and \ref{thm:imply2}} \label{app:proofs}

Next, we will prove Theorems~\ref{thm:imply} and \ref{thm:imply2}. We will start by proving some lemmas that are used in the proof of Theorem~\ref{thm:imply}.

\begin{lemma} \label{thm:size_zero}
Let $H$ be a group DAG on a grouping $W$ and let $\max_{i} |W_i| = 1$. Then any distribution $p$ on $\cup W_i$ that is groupwise faithful to $H$ given $W$ is faithful to some variable DAG on $\cup W_i$.
\end{lemma}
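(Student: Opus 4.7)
The plan is essentially to observe that when every group is a singleton, the group DAG \emph{is} a variable DAG under the natural identification of each singleton $W_i = \{w_i\}$ with the element $w_i$. The bulk of the argument is therefore unpacking the definitions.

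Concretely, I would first fix the bijection $\varphi: W \to \cup_i W_i$ sending each $W_i$ to its unique element $w_i$, and define a variable DAG $G$ on $\cup_i W_i$ by placing an arc $w_i w_j$ in $G$ whenever $W_i W_j$ is an arc of $H$. Under this identification, any conditioning set of the form $S = \cup_l T_l$ with $T \subseteq W \setminus \{W_i, W_j\}$ (which appears in the definition of groupwise faithfulness) corresponds exactly to an arbitrary subset $Z \subseteq (\cup_i W_i) \setminus \{w_i, w_j\}$, because each $T_l$ contributes a single variable. Thus every possible conditioning subset in the ``variable'' world is realized as a ``group'' conditioning set, and vice versa.

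Next I would point out that d-separation depends only on the graph structure, so $W_i$ and $W_j$ are d-separated by $S$ in $H$ iff $w_i$ and $w_j$ are d-separated by $\varphi(S)$ in $G$. Combined with groupwise faithfulness of $p$ to $H$, this yields
\[
w_i \text{ and } w_j \text{ d-separated by } Z \text{ in } G \;\Longleftrightarrow\; w_i \ci w_j \mid Z \text{ in } p
\]
for every pair of variables and every subset $Z$ of the remaining variables. That is precisely the definition of $p$ being faithful to $G$, which proves the lemma.

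There is no real obstacle here; the only thing to double-check is that the quantification over conditioning sets matches on both sides (i.e.\ that \emph{every} variable-level subset $Z$ arises as some $\cup_l T_l$), which follows immediately from $|W_i|=1$ for all $i$. I would state this matching explicitly to make the proof airtight, since this is the single place where the hypothesis $\max_i |W_i| = 1$ is actually used.
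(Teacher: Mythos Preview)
Your proposal is correct and follows essentially the same approach as the paper: the paper's proof simply notes that when every group is a singleton, the group DAG is a variable DAG ``up to a relabelling,'' so groupwise faithfulness collapses to ordinary faithfulness. Your version is considerably more explicit---spelling out the bijection, the transfer of d-separation, and the exact correspondence of conditioning sets---but the underlying idea is identical.
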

\begin{proof}
As all groups consist of exactly one variable, the conditional independencies implied by the group DAGs has to be  expressed exactly by the data-generating distribution, that is, the variable DAG  (up to a relabelling). Thus, the data-generating distribution $p$ has to be faithful to a DAG.
\end{proof}

\begin{lemma} \label{thm:no_neighbors}
Let $H$ be a group DAG on grouping $W$ and let $\max_{i} |W_i|  = 2$. If no group of size 2 has neighbors, then all distributions on $\cup W_i$ that are groupwise faithful to $H$ given $W$ are faithful to a variable DAG.
\end{lemma}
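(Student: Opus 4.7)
The plan is to explicitly build a variable DAG $G$ witnessing faithfulness, then verify the d-separation/independence correspondence by case analysis. Split the grouping as $W = W_s \cup W_2$, where $W_s$ consists of singleton groups and $W_2$ of size-2 groups. By hypothesis, every $W_j \in W_2$ is isolated in $H$, so $H$ d-separates $W_j$ from every other group by every union of other groups. Groupwise faithfulness then gives, in $p$, that $W_j \ci W_\ell \mid S$ for every $\ell \neq j$ and every $S$ that is a union of the remaining groups. The first real step is to upgrade these pairwise statements to the joint factorization
\[
p(N) \;=\; \Bigl(\prod_{W_j \in W_2} p(W_j)\Bigr)\cdot p\bigl(\cup W_s\bigr).
\]
I would obtain this by fixing an enumeration $V_1, V_2, \ldots$ of the groups other than $W_j$ and applying the contraction axiom of conditional independence iteratively: from $W_j \ci V_1$ and $W_j \ci V_2 \mid V_1$ deduce $W_j \ci (V_1 \cup V_2)$, and so on until $W_j \ci (N \setminus W_j)$. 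Doing this for each $W_j \in W_2$ yields the factorization.

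Next, construct the candidate variable DAG $G$ as follows. On the vertex set $\cup W_s$ use the subgraph of $H$ induced by the singleton groups (identifying each singleton with its sole variable). For each size-2 group $W_j = \{a,b\}$ add no edge if $a \ci b$ in $p$, and otherwise add the arc $a \to b$ (direction arbitrary). Place no edges between singleton vertices and $W_2$-vertices nor between distinct members of $W_2$.

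Verify faithfulness of $p$ to $G$ by checking all $x,y \in N$ and arbitrary $Z \subseteq N \setminus \{x,y\}$. If $x,y$ lie in the same $W_j \in W_2$, the factorization yields $x \ci y \mid Z$ iff $x \ci y$ in $p$, which matches the presence or absence of the direct arc in $G$ by construction. If exactly one of $x,y$ lies in some $W_j \in W_2$, the factorization gives $x \ci y \mid Z$ in $p$ for every $Z$, matching d-separation in $G$ since the $W_2$-components are disconnected from everything else. If $x,y$ are both singletons, decompose $Z = Z_s \cup Z_2$ into its singleton and size-2 parts; the factorization gives $x \ci y \mid Z$ in $p$ iff $x \ci y \mid Z_s$ in $p$. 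By groupwise faithfulness applied to the pair of singleton groups $\{x\}, \{y\}$ with $S = Z_s$ (a union of singleton groups), this is equivalent to $x$ and $y$ being d-separated by $Z_s$ in $H$, equivalently in the singleton sub-DAG used to build $G$. Finally, since $W_2$-vertices are isolated in $G$, they contribute no paths between singletons, so d-separation by $Z_s$ in the singleton sub-DAG is the same as d-separation by $Z$ in all of $G$.

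The main obstacle is the factorization step: groupwise faithfulness supplies only pairwise conditional independencies between groups, whereas the case analysis requires a joint independence between each size-2 group and the whole remainder of $N$. The iterated contraction argument above is what bridges this gap; after that, every case in the verification reduces either to a trivial check or to the singleton case, where groupwise faithfulness directly matches d-separation because singletons coincide with their underlying variables.
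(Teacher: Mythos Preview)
Your argument is correct and follows essentially the same plan as the paper: treat the isolated size-$2$ groups locally (add an internal edge or not according to whether the two variables are dependent) and use the singleton case (the paper's Lemma~\ref{thm:size_zero}) for the rest of $H$. The paper's proof is a two-line sketch that asserts the pieces fit together; your version makes this honest by (i) deriving the product factorization $p(N)=\bigl(\prod_{W_j\in W_2} p(W_j)\bigr)\,p(\cup W_s)$ from the pairwise groupwise independencies via iterated contraction, and (ii) carrying out the three-case d-separation/independence check explicitly. That factorization step is exactly the content the paper suppresses, and your use of the semigraphoid axioms is the right way to obtain it.
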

\begin{proof}
Clearly, none of the members of the groups of size 2 cannot be connected to any variables outside the group. The two variables inside a group are either independent or dependent. In both cases their joint distribution is faithful to a DAG. 

By Lemma~\ref{thm:size_zero}, the variable DAG corresponding to the subgraph of the group DAG induced by the groups of size 1 is faithful to a DAG. Thus, the distribution $p$ is faithful to a DAG
\end{proof}

Now, we are ready to prove Theorem~\ref{thm:imply} which follows straightforwardly from the previous lemmas.

\thmimply*
\begin{proof}
Follows directly from Lemmas~\ref{thm:size_zero} and \ref{thm:no_neighbors}.
\end{proof}

Next, we will prove Theorem~\ref{thm:imply2}. We start by proving two lemmas.

In the following proofs we will exploit the well-known fact that an exclusive or (XOR) distribution is unfaithful. That is, if we have three binary variables $X$, $Y$, and $Z$ where $P(X = 1) = P(Y = 1) = 1/2$ and $Z= \text{XOR}(X, Y)$ then the conditional independencies cannot be expressed exactly using any DAG. To see this, we note that $Z$ depends on both $X$ and $Y$. However, it is marginally independent of both of them.

\begin{lemma} \label{thm:large_neighbors}
Let $H$ be a group DAG on grouping $W$ and let $\max_{i} |W_i|  = 2$. If two groups of size 2 are neighbors, then not all distributions on $\cup W_i$ that are groupwise faithful to $H$ given $W$ are faithful to a variable DAG.
\end{lemma}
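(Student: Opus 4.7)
My strategy is to construct a distribution $p$ whose marginal on three well-chosen variables reproduces the classical XOR unfaithfulness, while the rest of $p$ respects the groupwise conditional independencies of $H$. Let $W_1 = \{a_1, a_2\}$ and $W_2 = \{b_1, b_2\}$ be the two adjacent size-2 groups in $H$, and, without loss of generality, orient the edge between them as $W_1 \to W_2$. The key triple will be $(a_1, a_2, b_2)$.

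To build $p$, I would first pick one representative from each group, taking $a_1$ from $W_1$ and $b_1$ from $W_2$, and let $H'$ be the DAG on representatives that has an arc between two representatives exactly when $H$ has the corresponding arc between their groups. Let $p'$ be a distribution on these representatives that is faithful to $H'$, parameterized with symmetric noisy CPTs so that each representative is marginally Bernoulli$(1/2)$. For any size-2 group other than $W_1$ and $W_2$, let the non-representative member be a deterministic copy of its representative. Finally, introduce an independent Bernoulli$(1/2)$ variable $\eta$ and set $a_2 := a_1 \oplus \eta$ and $b_2 := \eta$. A direct calculation shows that the marginal $P(a_1, a_2, b_2)$ is then the standard XOR distribution: each pair is marginally independent, yet $b_2 = a_1 \oplus a_2$ holds deterministically.

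Two verifications complete the proof. For variable unfaithfulness, suppose $p$ were faithful to some DAG $G$. The marginal independences $a_1 \ci b_2$ and $a_2 \ci b_2$ would force both pairs to be d-separated by $\emptyset$ in $G$; since d-separation from a set is equivalent to d-separation from each of its elements, this would imply $\{a_1, a_2\}$ d-separated from $b_2$ by $\emptyset$ in $G$, contradicting $b_2 = a_1 \oplus a_2$ via faithfulness. For groupwise faithfulness to $H$, one shows that every groupwise CI query $W_i \ci W_j \mid S$ in $p$ is equivalent to the analogous query among the representatives in $p'$, which by faithfulness of $p'$ matches the d-separation in $H'$ and hence in $H$. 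The main obstacle is this second verification: one must split cases on whether $W_1$ or $W_2$ lies in $\{W_i, W_j\}$ or in $S$, and in each case use the independence of $\eta$ from the representatives together with the bijective deterministic relations defining $a_2$ and $b_2$ to replace $W_1$ by $\{a_1\}$ and $W_2$ by $\{b_1\}$ in the conditioning set without altering the CI structure.
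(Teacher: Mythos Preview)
Your proposal is correct and follows essentially the same strategy as the paper: build a distribution whose backbone on one representative per group is faithful to (an isomorphic copy of) $H$, and graft an XOR triple across the two adjacent size-$2$ groups to destroy variable-level faithfulness without disturbing the groupwise CI structure. The paper places the XOR node in $W_2$ (setting $v := u \oplus w_2$ with $u \in W_1\setminus\{w_1\}$ and $v \in W_2\setminus\{w_2\}$) and leaves the remaining non-representatives as isolated nodes, whereas you place the XOR effectively in $W_1$ (via $a_2 := a_1 \oplus \eta$, $b_2 := \eta$) and make the remaining non-representatives deterministic copies of their representatives; both variants work for the same reason, and your outlined case analysis for the groupwise-faithfulness check is in fact more explicit than the paper's one-line assertion.
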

\begin{proof}
It suffices to show that for any group DAG--grouping pair there exists a distribution $p$ that implies exactly the same groupwise conditional independencies as $H$ given $W$ but $p$ is not faithful to any variable DAG.

Without loss of generality, let us assume that $|W_1| = |W_2| = 2$ and $W_1$ is a parent of $W_2$  in the group DAG. Further, let $w_i\in W_i$ be a specified element of a group. Now let us construct a variable DAG $G$ as follows. If there is an arc from $W_i$ to $W_j$ in $H$ then there is an arc from $w_i$ to $w_j$ in $G$. Further, there are arcs $uv$ and $w_2v$ in $G$, where $u\in W_1\setminus \{w_1\}$ and $v\in W_2\setminus \{w_2\}$. If we choose parameters such that the marginal distribution on $\cup W_i \setminus \{u, v\}$ is faithful to the induced subgraph $G[\cup W_i \setminus \{u, v\}]$ and the local conditional distribution of node $v$ is an exclusive or (XOR) distribution, then the distribution $p$ expresses exactly the same groupwise conditional independencies as $H$ but is not faithful to any DAG.
\end{proof}

\begin{lemma} \label{thm:size_three}
Let $H$ be a group DAG on grouping $W$ and let $\max_{i} |W_i| \geq 3$. Then not all distributions on $\cup W_i$ that are groupwise faithful to $H$ given $W$ are faithful to a variable DAG.
\end{lemma}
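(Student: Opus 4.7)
The plan is to exhibit, for any given group DAG $H$ on a grouping $W$ with $\max_{i}|W_i|\geq 3$, a distribution $p$ on $\cup W_i$ that is groupwise faithful to $H$ given $W$ but not faithful to any variable DAG. Fix a group $W_j$ with $|W_j|\geq 3$ and three distinct variables $a,b,c\in W_j$. For every other group $W_i$ choose a representative $w_i\in W_i$, and set $w_j=a$. Build a variable DAG $G$ by inserting an arc $w_i w_l$ whenever there is an arc from $W_i$ to $W_l$ in $H$, together with the internal arcs $ac$ and $bc$; every remaining vertex (the extra members of $W_j$, if any, and the non-representative members of the other groups) is isolated. Parameterize $p$ according to the factorization of $G$ so that $b$ is a uniform binary root, $c=a\oplus b$, $a$ is non-degenerate binary, and the remaining conditional distributions are generic, in the sense that the distribution obtained by removing $b$ and $c$ is faithful to the induced subgraph $G_0:=G\setminus\{ac,bc\}$ restricted to $(\cup W_i)\setminus\{b,c\}$.

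For the non-faithfulness half I will invoke the composition property of d-separation: in any DAG $G^\star$, if $X$ is d-separated from $Y$ by $Z$ and from $W$ by $Z$, then $X$ is d-separated from $Y\cup W$ by $Z$, because a path from $X$ to $Y\cup W$ is either a path to $Y$ or a path to $W$, both of which are blocked; consequently every distribution faithful to a DAG satisfies that $X\ci Y\,|\,Z$ and $X\ci W\,|\,Z$ together imply $X\ci Y\cup W\,|\,Z$. The marginal of $p$ on $\{a,b,c\}$ is the standard binary XOR distribution: because $b$ is uniform, $p(c\,|\,a)$ and $p(b\,|\,a)$ are both constant in $a$, yielding $a\ci b$ and $a\ci c$ in $p$; however, the pair $(b,c)$ determines $a$ via $a=b\oplus c$, so $a\not\ci\{b,c\}$. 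Composition therefore fails for $p$, which rules out faithfulness to any variable DAG on $\cup W_i$.

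For groupwise faithfulness to $H$ the key facts are that $p$ is Markov with respect to $G$ by construction, and that summing $b$ and $c$ out of the factorization gives $\sum_{b,c}P(b)\,P(c\,|\,a,b)=1$, so the marginal of $p$ on $(\cup W_i)\setminus\{b,c\}$ coincides with the $G_0$-distribution on that set and is faithful to $G_0$ there by the choice of parameters. In $G$, the vertex $a$ is the sole connection between $\{b,c\}$ and the outside of $W_j$, and $c$ is a childless collider, so $b$ and $c$ are d-separated in $G$ from every variable outside $W_j$ given $a$ together with any union of groups not containing $W_j$. Combining this with the semi-graphoid contraction axiom applied to $p$, I plan to reduce any groupwise CI query $W_i\ci W_l\,|\,S$, after replacing each occurrence of $W_j$ by its representative $a$, to an ordinary CI query inside the $G_0$-marginal; by the natural isomorphism between the subgraph of $G_0$ induced on the representatives and $H$, this matches d-separation in $H$ exactly.

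I expect the main obstacle to be the case analysis when $W_j$ itself appears in the groupwise query, either as one of the two groups being tested or inside the conditioning set $S$. Here the XOR-induced coupling among $a,b,c$ must be shown not to produce any spurious group-level dependencies: the reduction to $a$ is justified by noting that, conditional on $a$, the variables $b$ and $c$ are independent of everything outside $W_j$, so conditioning on (or marginalizing over) the entire $W_j$ is equivalent, for group-level CI purposes, to conditioning on (or marginalizing over) $a$ alone. Once this reduction is carried out, the remaining statements collapse to d-separation statements in $G_0$, and hence in $H$, completing the proof.
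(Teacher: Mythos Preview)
Your construction is exactly the paper's: embed a copy of $H$ on one representative per group, then inside a group of size $\geq 3$ attach a binary XOR gadget $c=a\oplus b$ with $a$ the representative and $b$ a fresh uniform root. The paper's own proof simply asserts that this distribution is groupwise faithful to $H$ yet not faithful to any variable DAG, whereas you spell out both halves --- non-faithfulness via failure of composition ($a\ci b$, $a\ci c$, $a\not\ci\{b,c\}$), and groupwise faithfulness via the semi-graphoid reduction replacing $W_j$ by $a$ --- so your argument is a strictly more detailed version of the same proof.
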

\begin{proof}
It is enough to show that for any group DAG--grouping pair there exists a distribution $p$ that implies exactly the same conditional independencies as $H$ but $p$ is not faithful to any variable DAG.

Without loss of generality, let us assume that $|W_1| \geq 3$. Further, let $w_i\in W_i$ be a specified element of a group. Now let us construct a variable DAG $G$ as follows. If there is an arc from $W_i$ to $W_j$ in $H$ then there is an arc from $w_i$ to $w_j$ in $G$. Further, there are arcs $w_1u$ and $vu$ in $G$, where $u,v\in W_1$. If we choose parameters such that the marginal distribution on $\cup W_i \setminus \{u, v\}$ is faithful to the induced subgraph $G[\cup W_i \setminus \{u, v\}]$ and the local conditional distribution of the node $u$ is an exclusive or (XOR) distribution, then the distribution $p$ expresses exactly the same groupwise conditional independencies as $H$ but is not faithful to any DAG.
\end{proof}

We are ready to prove Theorem~\ref{thm:imply2}.

\thmimplyb*
\begin{proof}
Follows directly from Lemmas~\ref{thm:large_neighbors} and \ref{thm:size_three}.
\end{proof}

\pagebreak
\section{Additional figures} \label{app:figures}

\begin{figure}[hbt!]
\begin{center}
\hspace{-0.3in}
\begin{tabular}{cc}
\includegraphics[width=0.42\linewidth]{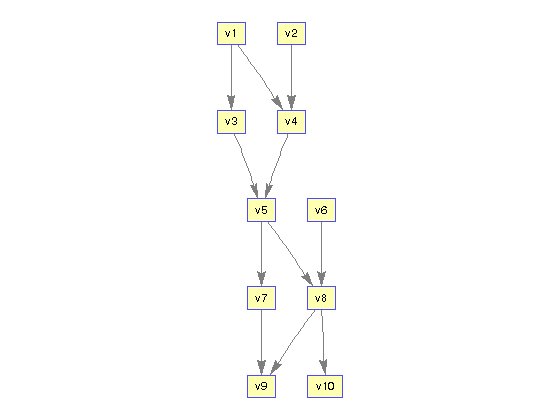}  &
\includegraphics[width=0.42\linewidth]{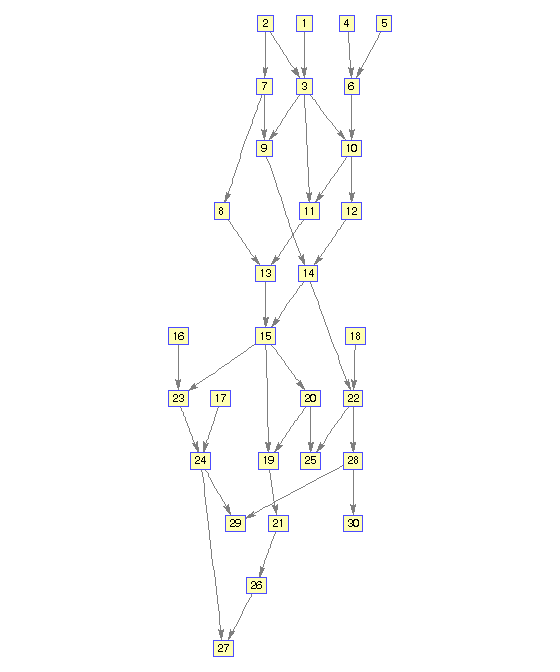}   \\
(a) & (b)\\
\includegraphics[width=0.42\linewidth]{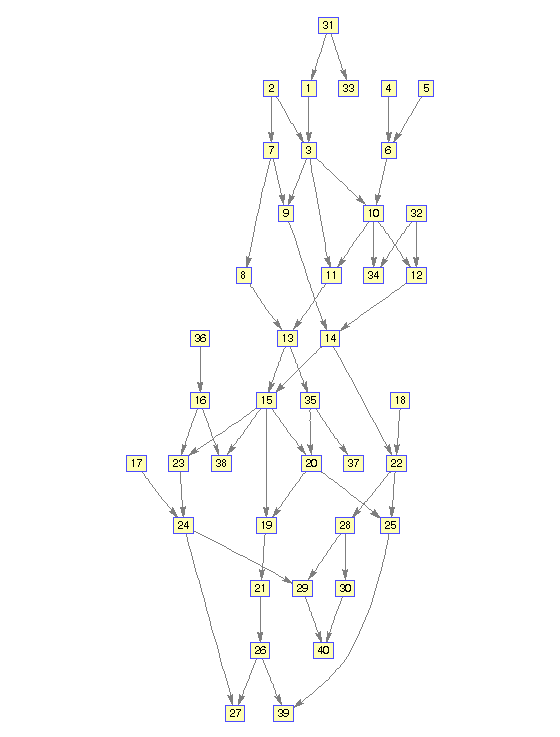}  &
\includegraphics[width=0.42\linewidth]{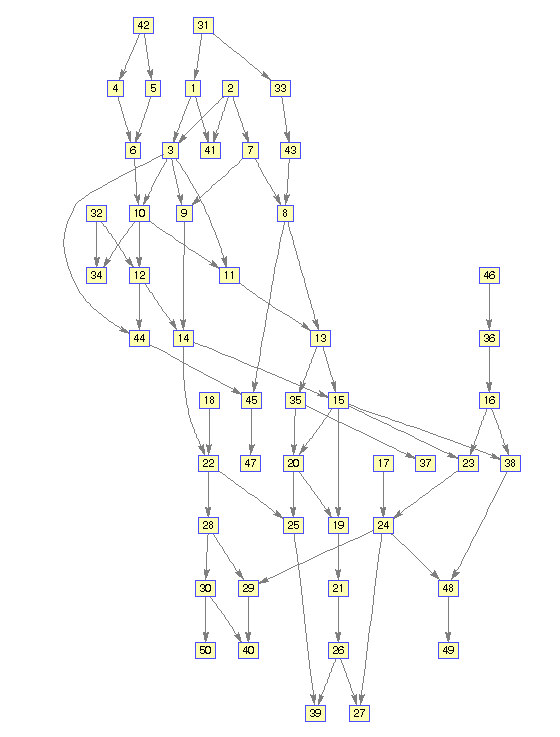} \\
(c) & (d) 
\end{tabular}
\caption{(a) The data-generating group DAG. (b) The variable DAG of the first data-generating structure. Groups are following: $v_1 = \{1, 2, 3\}$, $v_2 = \{4, 5, 6\}$, $v_3 = \{7, 8, 9\}$,  $v_4 = \{10, 11, 12\}$, $v_5 = \{13, 14, 15\}$, $v_6 = \{16, 17, 18\}$, $v_7 = \{19, 20, 21\}$, $v_8 = \{22, 23, 24\}$, $v_9 = \{25, 26, 27\}$, and $v_{10} = \{ 28, 29, 30\}$. (c) The variable DAG of the second data-generating structure. Groups are following: $v_1 = \{1, 2, 3, 31\}$, $v_2 = \{4, 5, 6, 32\}$, $v_3 = \{7, 8, 9, 33\}$,  $v_4 = \{10, 11, 12, 34\}$, $v_5 = \{13, 14, 15, 35\}$, $v_6 = \{16, 17, 18, 36\}$, $v_7 = \{19, 20, 21, 37\}$, $v_8 = \{22, 23, 24, 38\}$, $v_9 = \{25, 26, 27, 39\}$, and $v_{10} = \{ 28, 29, 30, 40\}$.  (d) The variable DAG of the third data-generating structure. Groups are following: $v_1 = \{1, 2, 3, 31, 41\}$, $v_2 = \{4, 5, 6, 32, 42\}$, $v_3 = \{7, 8, 9, 33, 43\}$,  $v_4 = \{10, 11, 12, 34, 44\}$, $v_5 = \{13, 14, 15, 35, 45\}$, $v_6 = \{16, 17, 18, 36, 46\}$, $v_7 = \{19, 20, 21, 37, 47\}$, $v_8 = \{22, 23, 24, 38, 48\}$, $v_9 = \{25, 26, 27, 39, 49\}$, and $v_{10} = \{ 28, 29, 30, 40, 50\}$.   
}
\end{center}
\end{figure}

\begin{figure*}[hbt!]
	\centering
	\begin{tabular}{cc}
	\includegraphics[width=0.49\textwidth]{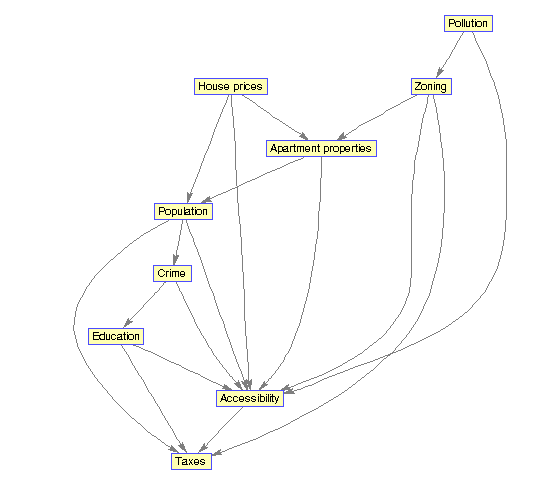} &
			\includegraphics[width=0.49\textwidth]{housing2}\\
			(a) & (b) \\
				\includegraphics[width=0.49\textwidth]{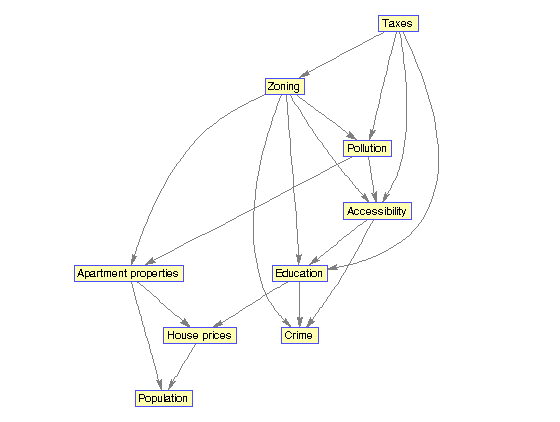} &
					\includegraphics[width=0.49\textwidth]{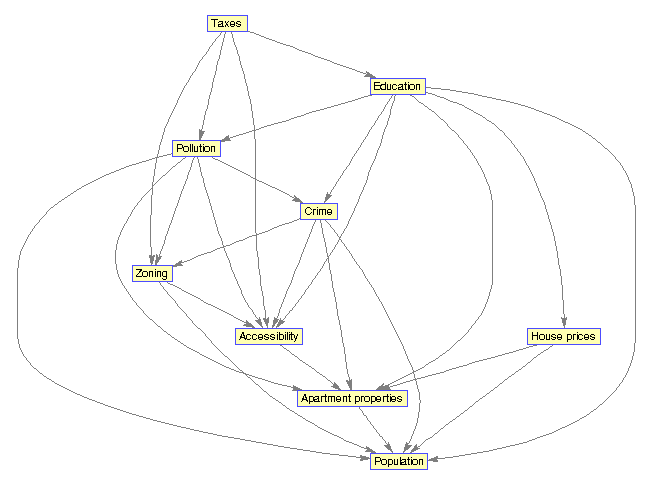} \\
					(c) & (d)\\
						\includegraphics[width=0.49\textwidth]{housing5} & \\
						(e) &
	\end{tabular}
	\caption{The group DAG learned from {\sc Housing} data using (a) constraint-based direct learning, (b) constraint-based learning via variable DAG, (c) score-based direct learning, (d) score-based learning via variable DAG, and (e) combined learning.}
\end{figure*}

\begin{figure*}[hbt!]
	\centering
	\begin{tabular}{cc}
	\includegraphics[width=0.5\textwidth]{housing6} &
	\includegraphics[width=0.5\textwidth]{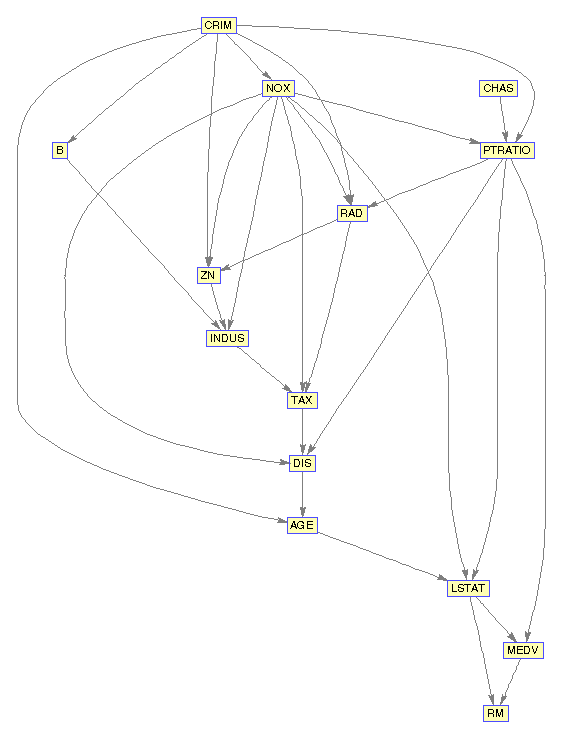} \\
	(a) & (b) \\
	\includegraphics[width=0.5\textwidth]{housing8} & \\
  (c) &
	\end{tabular}
	\caption{The variable DAG learned from {\sc Housing} data using (a) constraint-based learning, (b)  score-based learning, and (c) combined learning.}	
\end{figure*}

\end{document}